\def\csname ver@subfig.sty\endcsname{}
\newcommand{\eg}{\emph{e.g.},\ }
\newcommand{\ie}{\emph{i.e.},\ }
\newcommand{\by}{\mathbf{y}}
\newcommand{\bx}{\mathbf{x}}
\definecolor{blanchedalmond}{rgb}{1.0, 0.92, 0.8}
\definecolor{carmine}{rgb}{0.59, 0.0, 0.09}
\definecolor{lightblue}{rgb}{0.22,0.45,0.70}%
\newtheorem{theorem}{Theorem}[section]
\renewcommand{\mathbf}{\boldsymbol}
\def\Ddots{\mathinner{\mkern1mu\raise\p@
\vbox{\kern7\p@\hbox{.}}\mkern2mu
\raise4\p@\hbox{.}\mkern2mu\raise7\p@\hbox{.}\mkern1mu}}
\definecolor{amaranth}{rgb}{0.9, 0.17, 0.31}
\definecolor{antiquebrass}{rgb}{0.8, 0.58, 0.46}
\definecolor{antiquefuchsia}{rgb}{0.57, 0.36, 0.51}
\definecolor{chromeyellow}{rgb}{0.31, 0.47, 0.26}
\newcommand{\sft}{\mathrm{sft}}
\newcommand{\negtext}{\text{neg}}
\newcommand{\hatby}{\hat{\by}}
\newcommand{\rftdata}{\mathcal{D}^+_{\pi_\sft}}
\newcommand{\dpodata}{\mathcal{D}^{\pm}_{\pi_\sft}}
\newcommand{\syndata}{\mathcal{D}_{\mathrm{syn}}}
\newcommand{\posby}{+\by}
\newcommand{\negby}{-\by}
\newcommand{\hatposby}{+\hat{\by}}
\newcommand{\hatnegby}{-\hat{\by}}
\newcommand{\bok}{\text{BoK}}
\newcommand{\brck}[1]{\left [ #1 \right ] }
\newtcolorbox{AIbox}[2][]{aibox,title=#2,#1}
\definecolor{lightblue}{rgb}{0.22,0.45,0.70}%
\title{RL on Incorrect Synthetic Data Scales the Efficiency of LLM Math Reasoning by Eight-Fold}
\author[1]{Amrith Setlur}
\author[1]{Saurabh Garg}
\author[2]{Xinyang (Young) Geng}
\author[3]{Naman Garg}
\author[1]{Virginia Smith}
\author[2]{Aviral Kumar}
\affil[1]{Carnegie Mellon University}
\affil[2]{Google DeepMind}
\affil[3]{MultiOn}
\begin{abstract}
Training on model-generated synthetic data is a promising approach for finetuning LLMs, but it remains unclear when it helps or hurts. In this paper, we investigate this question for math reasoning via an empirical study, followed by building a conceptual understanding of our observations. 
First, we find that while the typical approach of finetuning a model on synthetic correct or \emph{positive} problem-solution pairs generated by capable models offers modest performance gains, sampling more correct solutions from the finetuned learner itself followed by subsequent fine-tuning on this self-generated data \textbf{doubles} the efficiency of the same synthetic problems. At the same time, training on model-generated positives can amplify various spurious  correlations, resulting in flat or even inverse scaling trends as the amount of data increases. Surprisingly, we find that several of these issues can be addressed if we also utilize \emph{negative} responses, \ie model-generated responses that are deemed incorrect by a final answer verifier. Crucially, these negatives must be constructed such that the training can appropriately recover the utility or advantage of each intermediate step in the negative response. With this \emph{per-step} scheme, we are able to attain consistent gains over only positive data, attaining performance similar to amplifying the amount of synthetic data by $\mathbf{8 \times}$. We show that training on per-step negatives can help to unlearn spurious correlations in the positive data, and is equivalent to advantage-weighted reinforcement learning (RL),
implying that it inherits robustness benefits of RL over imitating positive data alone. 
\end{abstract}
\begin{document}

\maketitle

\vspace{-0.4cm}
\section{Introduction}
\label{sec:intro}
\vspace{-0.15cm}

Training large language models (LLMs) relies on the ability to train on large amounts of high-quality data. It is predicted that we will run out of high-quality internet data by 2026~\cite{villalobos2022will,liu2024best}, necessitating training on model-generated data, or what is commonly referred to as \emph{synthetic data}. Recent trends illustrate that scaling up synthetic data can lead to improvements~\citep{li2024common,chen2024self} on hard reasoning problems, while other results illustrate that training on synthetic data can steer the performance of the model into a downward spiral~\citep{shumailov2023curse,alemohammad2023self,gerstgrasser2024model}---amplying biases, misinformation, and undesired stylistic properties. Thus while \emph{in principle}, synthetic data could potentially address data scarcity, it must be designed in an appropriate manner to be effective. However, this has been hard due to a lack of an understanding of how synthetic data contributes to LLM behavior.

To provide clarity on how synthetic data contributes to performance, we aim to understand its impact on LLM capabilities via a study on math reasoning, a prevalent scenario where synthetic data is used. Typically, in this  setting, synthetic data corresponds to correct or \emph{positive} model-generated responses for a novel set of initial problems synthesized by prompting capable models~\citep{li2024common,liu2023exploration}. 
The resulting model is then evaluated on a held-out set of problems drawn from a test set. Perhaps as expected, we find that performance improves when finetuning models on positive synthetic responses, though the scaling rates for performance improvement are often substantially slower than those observed during pretraining. Concretely, we find that under the scaling law of \citet{zhang2024scaling}, the error rate scales as $\approx$${D}^{-0.05}$ to ${D}^{-0.15}$ in the size $D$ of synthetic dataset. Second, we observe that not all types of positive synthetic data are equally effective: often positive responses self-generated by the learner itself are as effective as $2\times$ synthetic data from bigger models in improving performance. This is because responses from a similar model are ``easier-to-fit'' than those from a more capable model, resulting in reduced memorization~\citep{kang2024unfamiliar,tirumala2022memorization} during finetuning. We also observe that if the positive response contains incorrect/irrelevant intermediate steps, training on such data often incentivizes the model to overfit on spurious correlations, leading to a flat or even inverse scaling with more data.

\begin{figure*}
    \vspace{-0.37cm}
    \captionsetup{font=footnotesize,skip=6pt}
  \centering
  \includegraphics[width=0.74\linewidth]{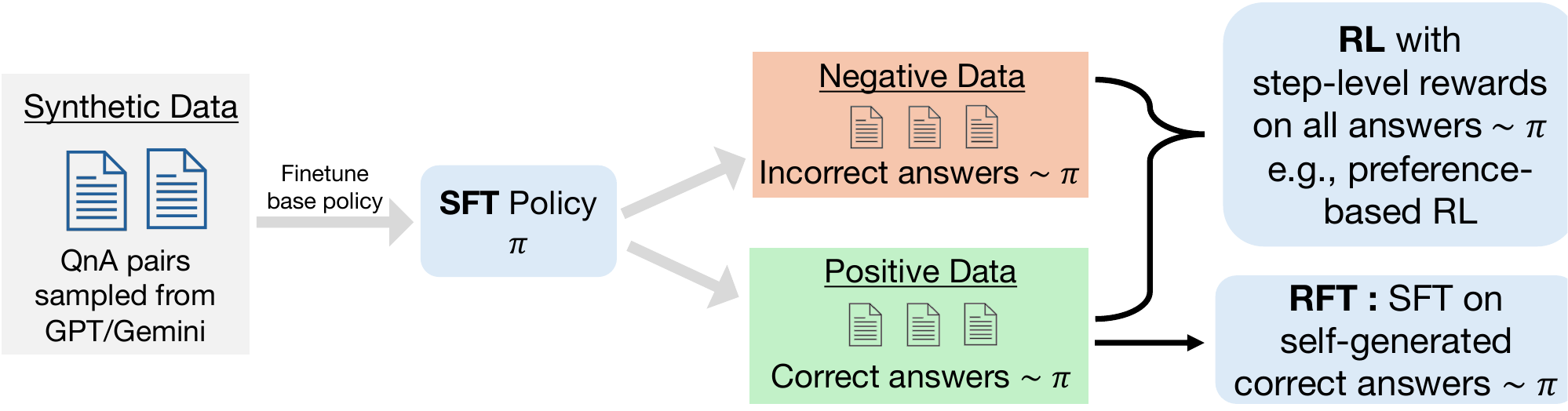}
  ~\vline~
  \includegraphics[width=0.21\linewidth]{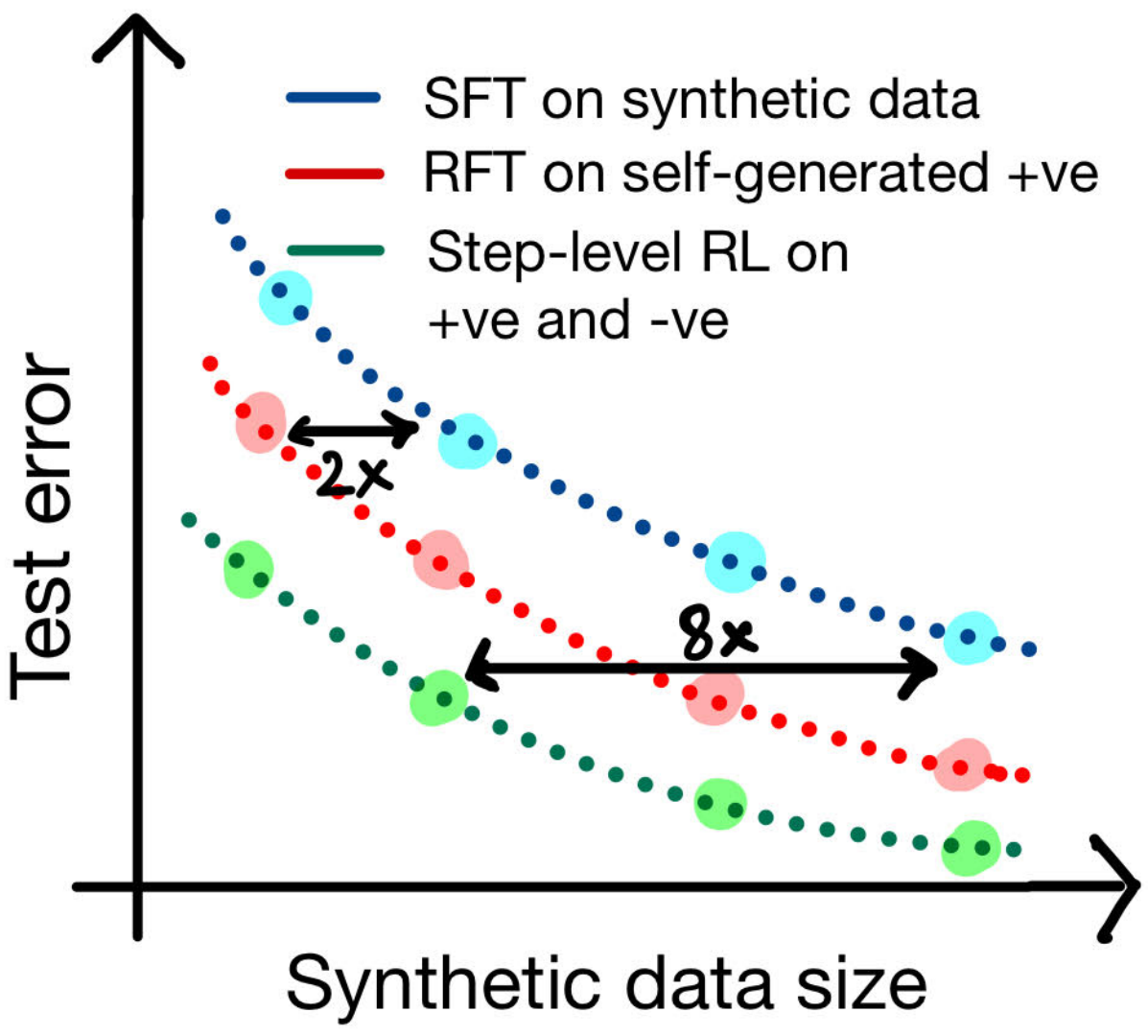}
  
  \caption{
  \label{fig:intro-figure}
  \textbf{\emph{Left: Positive/negative synthetic data:}}  Pictorial representation of our synthetic data pipeline, definitions for positive and negative synthetic data and how they are fed to supervised finetuning (SFT; positive synthetic data from more capable models), rejection finetuning (RFT; positive self-generated synthetic data from the SFT model) and step-level RL (via per-step DPO) algorithms we analyze in our work. \emph{\textbf{Right: An illustration of our main result:}} using RFT on self-generated positives scales the efficiency of SFT on synthetic problems and responses by 2x, whereas using step-level RL with negative data scales the efficiency by 8x.
  }
  \vspace{-0.3cm}
\end{figure*}
Perhaps surprisingly, we find that the aforementioned pathologies of training on positive data only can be addressed if we also utilize synthetic \emph{negative} responses: responses generated by the model that do not result in obtaining a correct final answer. One way to utilize negative responses is via methods such as  direct preference optimization (DPO)~\citep{rafailov2023direct}. While performance of standard DPO~\citep{rafailov2023direct} largely flatlines as the number of synthetic problems are scaled up (Figure~\ref{fig:neg-scaling}), we are able to attain consistent improvements if the negative data is generated appropriately. A solution trace for a math problem typically comprises of multiple \emph{reasoning steps} corresponding to intermediate results.  Our insight is that instead of contrasting arbitrary correct and incorrect responses, we should contrast those positive and negative responses that depict good and bad choices for the more ``critical'' intermediate steps: steps that the model must carefully produce so as to succeed at the problem. In other words, critical steps are those which the model is unable to recover from, and hence, must be emphasized. %
With this scheme, we are able to attain consistent gains over only positive data, \textbf{attaining performance similar to scaling up positive synthetic data by $8 \times$.} We also show that training on this sort of negative data evades spurious steps amplified by training on positive data alone.

To theoretically understand our findings, we build a conceptual model of how training on this data benefits performance. Formally, we show that this construction of negative data, which emphasizes ``critical'' tokens (Figure~\ref{fig:critical_states}) enables us to perform credit assignment, and is equivalent to training the model with per-step advantage-weighted reinforcement learning (RL)~\citep{peng2019advantage} on a mixture of positive and negative synthetic data. Specifically, these advantage values are computed under an optimal value function induced by sampling multiple responses under the SFT policy obtained by training on only the positive data. This reduction of using negative data to advantage-weighted RL enables us to conceptually compare it to training on positive data, which corresponds to imitation learning (\ie behavioral cloning) on positive data. 
First, we are able to argue for the generalization gains of advantage-weighted RL through the lens of distribution robust objectives.
Second, building on theoretical results in RL~\citep{kumar2022should}, we are also able to show that when advantages can be estimated reliably, advantage-weighted RL will be significantly more sample-efficient compared to imitation. 
Overall, this model explains the utility of negative data over only positive data.

Our contribution is a study of the role of synthetic data in improving math reasoning capabilities of LLMs. We derive scaling laws for positive and negative data on common reasoning benchmarks such as GSM8K~\citep{cobbe2021gsm8k} and MATH~\citep{hendrycksmath2021}, and observe that: \textbf{(a)} training on positive synthetic data from capable models results in scaling rates that are significantly slower than standard empirical risk minimization; \textbf{(b)} training on model-generated positive synthetic data can improve sample efficiency  by $2\times$ but also amplifies spurious correlations; \textbf{(c)} appropriate ways of constructing learner-specific negative data with emphasis on critical steps,  results in a performance boost equivalent to scaling up positive data $8\times$; \textbf{(d)} training with negative data provides a mechanism to unlearn spurious correlations; and \textbf{(e)} we present a conceptual model inspired from RL to explain our observations for synthetic data, and use it to explain the generalization benefits we see.

\vspace{-0.35cm}
\section{Related Work}
\label{sec:relwork}
\vspace{-0.2cm}

A standard procedure to finetune a pretrained LLM is teacher-forcing on  expert data, \ie maximizing the likelihood of the next token given all previous tokens~\cite{williams1989learning,brown2020language}. In Appendix~\ref{sec:additional-relwork} we discuss some failure modes of this procedure for math reasoning that positive or negative synthetic data can address. 

\textbf{Positive synthetic data.}
Learning theory dictates that the SFT policy trained on more SFT data (\eg 1.5M for DeepSeek-Math~\cite{bi2024deepseek}) would have improved math reasoning capbabilities.
Thus, a common goal for generating synthetic data as close as possible to the SFT data~\cite{li2024common,liu2023exploration,liu2024best}. That said, generating high quality math data can be challenging, since verification can often be hard.
When synthetic data is verified by larger models~\cite{sharma2024critical,wang2024mathshepherd}, recent works~\cite{luo2023wizardmath,yu2024metamath} observe scaling similar to finetuning LLMs on expert data~\cite{zhang2024scaling,yuan2023scaling}, while another work~\cite{dong2023abilities} notes the compositional gains from SFT data for code generation.
Common sources of ``good'' synthetic data include responses from stronger teachers~\cite{li2024common,lightman2023lets},
 or data generated by the SFT policy 
 itself, in the framework of reinforced self-training (ReST) and STaR ~\cite{zelikman2022star,singh2024human,chen2024self,yuan2023scaling}. 
In our work, we study and compare the performance scaling with positive synthetic data from bigger models like GPT-4 and Gemini 1.5 Pro with self-generated positive data. We connect our findings to evidence showing ``ease of learning'' generalizable features on self-generated completions~\cite{kang2024unfamiliar} which often prevents undesirable memorization~\cite{tirumala2022memorization}. 
Finally, our work also sheds light on several concerns about training on synthetic positive data amplifying biases~\cite{seddik2024bad,wyllie2024fairness}, and leading to model collapse~\cite{dohmatob2024model,gerstgrasser2024model}, especially due to overfitting on``spurious'' intermediate steps. We conceptually explain this phenomenon and also discuss how negative model-generated responses can help identify and unlearn those spurious steps.

\textbf{Benefits and nuances of negative synthetic data.} 
While most works on synthetic data for math reasoning~\citep{yu2024metamath,li2024common,liu2024best,yuan2023scaling} focus on training on positive (correct) answers, our work also studies complementary gains from negative (incorrect) completions generated by the SFT policy~\cite{hwang2024self,pal2024smaug,yuan2024self,pang2024iterative}.
To leverage sub-optimal negative data, we adopt the generic framework of offline preference  optimization~\cite{rafailov2023direct,ethayarajh2024kto,zhao2022calibrating}, where a preference pair is constructed using correct and incorrect responses for the same problem~\cite{pal2024smaug}. 
Despite numerous studies on preference data composition~\citep{chen2024self,cheng2023adversarial,tajwar2024preference,vicuna2023,wang2023rlhf,munos2023nash,swamy2024minimaximalist}, it remains unclear what is the best approach to pose a reasoning problem as a preference optimization problem. Randomly pairing correct and incorrect completions in a preference pair can lead to poor performance~\citep{pang2024iterative,hong2024reference,xu2024contrastive,pal2024smaug} due to objective mismatch~\citep{tajwar2024preference,zhang2024negative} and requires auxilliary losses to perform well. Another option is to utilize negative data for training verifiers~\cite{hosseini2024v,yu2023outcome} but this line of work still only trains the policy using positive data.
We introduce a conceptual model of negative data, where we understand how certain choices of negative data can assign per-step credits, which we then use to establish the equivalence of preference optimiztion to to advantage weighted RL. Self-explore method in \citet{hwang2024self} can be viewed as an special instance of our general framework. Another work exploiting per-step credit assignment is \citet{wang2024mathshepherd}: through tree-based sampling they identify and use the reasoning subsequence that led to the most incorrect answers under the SFT policy for training a reward model. While this is indeed related, our conceptual model and analysis also aims to understand why assigning per-step credits  can generalize better by unlearning spurious correlations, \textit{e.g.,} when the credits are given by the Q-function of the ``best-of-K'' SFT policy.

\vspace{-0.25cm}
\section{Problem Setup and Synthetic Data Generation Pipeline}
\label{sec:preliminaries}
\vspace{-0.25cm}

Our goal in this paper is to understand the role of synthetic data in producing strong language model reasoners. Building on the recipe of \citet{li2024common,liu2023exploration}, we collect synthetic data consisting of both novel problems designed by capable models such as GPT4~\citep{openai2024gpt4} and Gemini 1.5 Pro~\citep{geminiteam2024gemini}, and responses to these problems, obtained from the same models. Concretely, we focus on two mathematical reasoning benchmarks: GSM8K~\citep{cobbe2021gsm8k} and MATH~\citep{hendrycksmath2021}. 

\textbf{Synthetic data pipeline.} First, given a dataset $\mathcal{D}_\text{real} = \{(\bx^r_i, \by^r_i)\}$ of problems $\bx_i^r \sim p_\text{real}(\bx)$ and solution traces $\by_i^r \sim p_\text{real}(\by\mid\bx_i)$, we prompt one of the highly-capable models with a uniformly random sample $(\bx^{r}_i, \by^{r}_i) \in \mathcal{D}_\text{real}$ and ask the model to generate a new problem $\bx_i$ such that it is similar to the real problem $\bx_i^{r}$, in a way that a feasible solution exists.
Second, we ask the model to provide a solution trace answer $\by_i$ with step-by-step reasoning (exact prompts for $\bx_i, \by_i$ are borrowed from \citet{li2024common}, shown in Appendix~\ref{sec:synth_data_appendix}). 
We assume that the answers generated via this process are accurate, and perform lightweight filtering step to remove duplicates, badly-formatted answer traces, and model failures. 
Based on the above, for any synthetic problem and solution pair $(\bx, \by)$, we can define a binary reward function 
$r(\by, \hat{\by}) \mapsto \{0,1\}$, 
which verifies if a new solution trace $\hat{\by}$ is correct or not. This is implemented with a set of answer extraction and string matching tools borrowed from~\cite{yu2024metamath,li2024common}.
We say that a new trace $\hat{\by}$ 
is a {\emph{positive}} trace if it produces the correct final answer 
\ie $r(\hat{\by}, \by)=1$, and {\emph{negative}} if it produces an incorrect final answer, \ie $r(\hat{\by}, \by)=0$. 
By definition, $r(\by, \by)=1$, and the original trace $\by$ is always positive.

\textbf{Positive and negative datasets.}  The above process induces a joint distribution $p_\text{syn}(\bx, \by)$, \textit{iid} samples from which yields positive synthetic dataset  $\syndata$. 
We note that the sampling process for $\syndata$ is designed to ensure that the induced marginal distribution over synthetic problems  $p_\text{syn}(\bx)$ is close to $p_\text{real}(\bx)$. 
We will use $\mathcal{D}_\pi^{+}$ to denote the positive dataset of $(\bx, \hatposby)$ where $\hatposby$ is a positive solution trace generated from some policy $\pi(\cdot \mid \bx)$. 
For a positive $\hatposby$ and negative $\hatnegby$ trace, sampled from the same policy $\pi(\cdot \mid \bx)$, we denote a dataset over problems and solution pairs: $(\bx, \hatposby, \hatnegby)$ as $\mathcal{D}^{\pm}_\pi$.

\textbf{Reasoning steps.} The trace $\by_i$ consists of several intermediate steps, $\by_i = \brck{\by_{i,1}, \ldots, \by_{i,L}}$. We assume each trace has at most $L$ steps, and use $\by_{1:t}$ to denote the subsequence of first $t$ steps. Since mathematical reasoning problems require step-by-step computation, simply arriving at an incorrect final answer does not mean that all  steps in a negative $\hat{\by}$ are incorrect. 
Similarly, a positive $\hat{\by}$ may also have incorrect reasoning steps. 
In fact, even the original answers generated by more capable models in $\syndata$ may also contain incorrect reasoning steps, and training on such  traces may actually lead to unintended consequences (Section~\ref{sec:positive-synth-expts}).

\vspace{-0.3cm}
\section{Learning from Synthetic Data}
\label{sec:learning}
\vspace{-0.15cm}
In this section, we discuss various algorithms for learning from the synthetic dataset $\syndata$ discussed in the previous section, as well as positive and negative solution traces generated using a model. 

\textbf{Supervised and rejection finetuning (SFT and RFT).} Given positive  synthetic $\syndata$, perhaps the most straightforward approach (and the most prevalent) is to learn $\pi_\sft$ on this data via supervised next-token prediction: $\pi_\mathrm{sft} (\cdot|\bx) := \arg \max_{\pi} \mathbb{E}_{\bx, \by \sim \syndata} \left[ \log \pi(\by|\bx) \right]$. Another option is to train via supervised next-token prediction on problems in $\syndata$, but when using a positive solution trace $\hat{\by}$ sampled from $\pi_\text{sft}(\cdot|\bx)$, instead of positive synthetic responses from the capable models in $\syndata$. 
Akin to rejection finetuning (RFT~\citep{yuan2023scaling}) or STaR~\citep{zelikman2022star}, 
sampling from $\pi_\text{sft}(\cdot\mid\bx)$ once is not guaranteed to give a positive response, and we instead sample $M$ times for each $\bx$ and construct the dataset $\rftdata$ of SFT policy generated positive responses. 
Then, we apply the next-token prediction loss on $\rftdata$.

\textbf{Preference optimization.} Beyond positive data, we can also learn from negative synthetic data generated from the SFT policy, especially when contrasted with positive responses. 
However, learning from negative data presents multiple open design questions pertaining to the construction of negative traces, and the choice of the loss function, and simple supervised fine-tuning will not be a good choice since it will incentivize the model to produce more errors.
Therefore, we utilize a contrastive training approach, direct preference optimization (DPO~\citep{rafailov2023direct}) for incorporating negative data from $\pi_\sft$. In a nutshell, DPO trains a policy using the following preference optimization objective:
\begin{align}
\label{eq:dpo_objective}
    \min_\pi~~ \mathcal{L}_\text{DPO}(\pi) := \mathbb{E}_{(\bx, \posby, \negby) \sim \mathcal{D}_{\pi_\sft}^{\pm}}  \left[ \sigma \left( \beta \log \frac{\pi(\posby\mid\bx)}{\pi_\sft(\posby\mid\bx)} - \beta \log \frac{\pi(\negby\mid\bx)}{\pi_\sft(\negby\mid\bx)} \right) \right].
\end{align}
We consider two objectives that construct negative data $\hatnegby$ in distinct ways and subsequently train the model on that data using Equation~\ref{eq:dpo_objective}. The first variant is \textbf{\emph{standard DPO}}~\cite{rafailov2023direct}, which samples negative data $\hatnegby$ from the SFT policy (with rejection sampling) and adds  $(\bx, \by, \hatnegby)$ to $\dpodata$. 
The second variant is \textbf{\emph{per-step DPO}}~\citep{hwang2024self}, which first samples a complete solution trace $\hatby_{1:L}$ from $\pi_\sft$ and then determines the ``first  pit'' $\hatby_c$. The first pit $\hatby_c$ is the step where any completion following the step: $\hat{\by}_{c+1:L} \sim \pi_\text{sft}(\cdot \mid \bx, \hatby_{1:c})$ leads to incorrect answers in expectation under $\pi_\sft$.  The triplet $(\bx,\by,\hatby_{1:c})$ is added to the preference  dataset  $\dpodata$.

\vspace{-0.2cm}
\section{Positive Data Improves Coverage, But  Amplifies Spurious Correlations}
\label{sec:positive-synth-expts}
\vspace{-0.15cm}

We first analyze the influence of scaling up positive synthetic data on GSM8K and MATH. In this experiment, we fine-tune DeepSeek-Math-7B~\cite{bi2024deepseek} and LLama2-7B~\cite{touvron2023llama} models (details in Appendix~\ref{sec:implementation}) on varying sizes of $\syndata$, constructed out of a 5:1 mixture of GPT-4-turbo~\cite{openai2024gpt4} and Gemini-1.5 Pro~\cite{geminiteam2024gemini}\footnote{
We use API versions of these models as of May 5, 2024. All experiments in this paper (including experiments with GPT and Gemini models) were performed entirely by Amrith Setlur and Saurabh Garg.
}. 
We obtain a series of SFT policies on this data scaling ladder. We then train a series of models by running one iteration of RFT on data obtained from the SFT policies at each step.

\begin{figure}[!t]
     \captionsetup{font=small,skip=0pt}
    \centering
    \begin{subfigure}[b]{0.33\textwidth}
    \captionsetup{font=small,skip=0pt}
        \centering
        \includegraphics[width=0.99\textwidth]{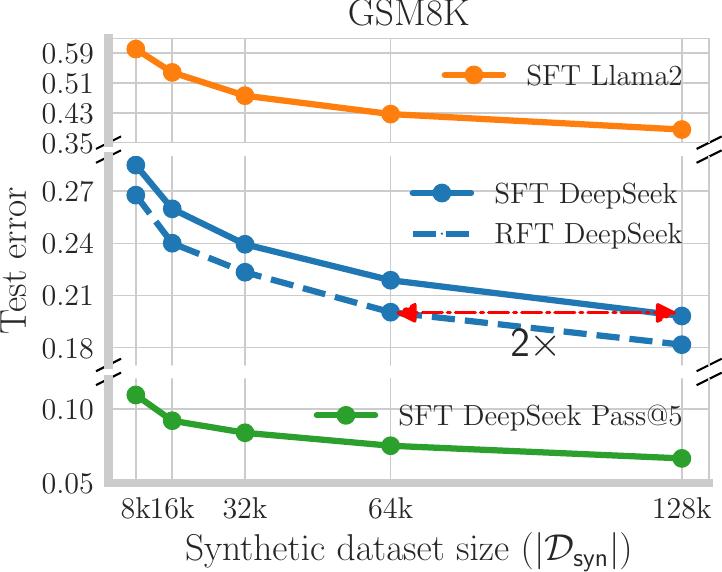}
        \caption*{(a)}
    \end{subfigure}\hfill
    \begin{subfigure}[b]{0.33\textwidth}
     \captionsetup{font=small,skip=0pt}
        \centering
        \includegraphics[width=0.99\textwidth]{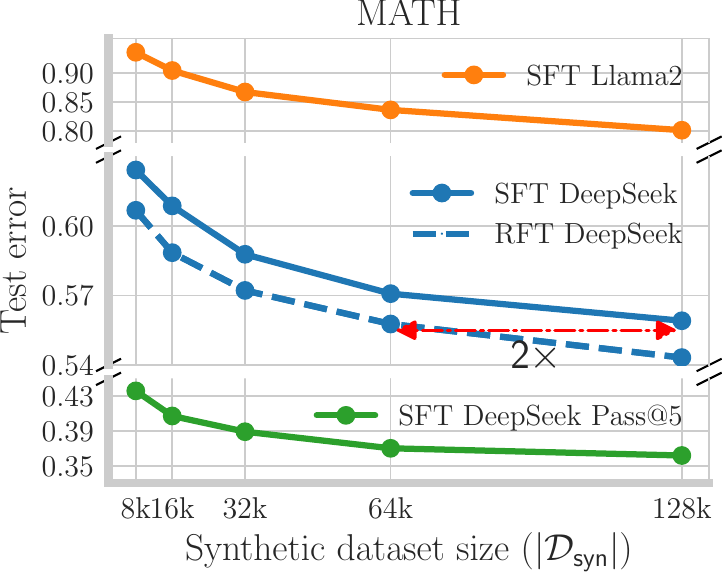}
        \caption*{(b)}
    \end{subfigure}\hfill
    \begin{subfigure}[b]{0.33\textwidth}
    \captionsetup{font=small,skip=0pt}
        \centering
        \includegraphics[width=0.99\textwidth]{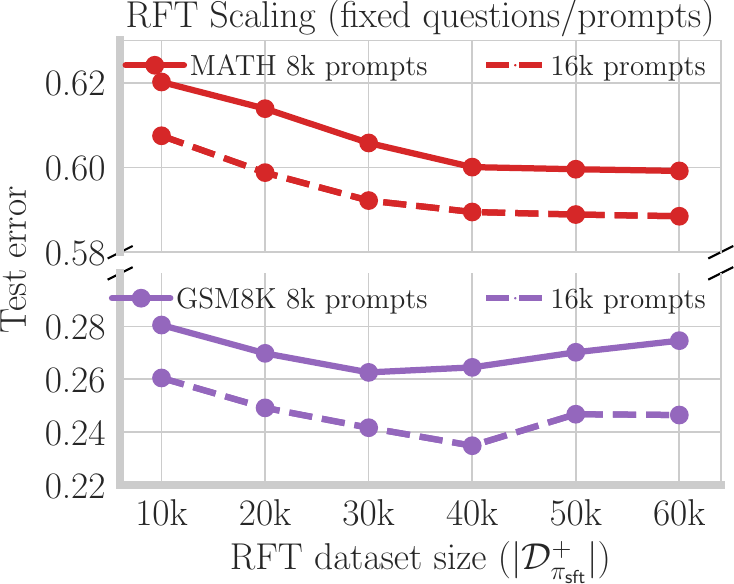}
        \caption*{(c)}
        \label{fig:sub2}
    \end{subfigure}
    \caption{\footnotesize{{\textbf{\emph{Positive data scaling laws:}} On GSM8K (a) and MATH (b), we evaluate SFT trained on $\syndata$ and RFT that uses SFT policy generated positives ($\rftdata$), as we scale $\syndata$, observing $\rftdata$ to be $2\times$ as effective as $\syndata$. In (c), we plot performance of RFT the number of correct solutions in $\rftdata$ are scaled, for a fixed set of 8k/16k problems from $\syndata$, observing that scaling model positives can amplify spurious correlations.}\vspace{-0.4cm}}}
    \label{fig:pos-scaling}
\end{figure}

{\textbf{Scaling results with positive synthetic data GPT-4 and Gemini-1.5 Pro.}} 
Since we assume that these more capable models generate correct solutions for new problems, by scaling $\syndata$ we are increasing \emph{coverage} under $p_\mathrm{real}$, \ie  adding new $\bx, \by$ with non-zero probability under $p_\mathrm{real}$. 
In Figures~\ref{fig:pos-scaling}(a,b), we plot the test error rate of the SFT policy as  $\syndata$ is scaled. As expected, we observe that the test error rate on both GSM8K and MATH improves with more positive data. 
Further, by simply fitting the parametric scaling law from~\cite{zhang2024scaling}, for $D:=|\syndata|$, we find that the scaling trends decay as $\approx$$D^{-0.15}$ on GSM8K and $\approx$$D^{-0.05}$ on the harder MATH dataset, with similar trends for the corresponding pass@$5$ error rates. 
Since these scaling trends are much more underwhelming than those for pre-training~\citep{hoffmann2022training}, this perhaps implies that samples in $\syndata$ are indeed improving coverage over samples in $p_\text{real}(\bx, \by)$, but maybe not as efficiently as sampling \textit{i.i.d.} samples directly from this ground-truth distribution.

{\textbf{Scaling results with positive synthetic data from 7B SFT policy.}} Previously, we scaled problems in $\syndata$ by querying GPT-4 and Gemini-1.5. 
Now, for existing problems in $\syndata$ 
we generate new responses by sampling from the $\pi_\sft$ trained on problems+solutions in $\syndata$.
For any  $(\bx, \by) \in \syndata$  we generate verified positive solution traces $\hat{\by} \sim \pi_\sft$ s.t. $r(\hat{\by}, \by)=1$. 
Following ~\citet{yuan2024advancing}, to ensure we sample enough correct responses, we sample $100$ times from $\pi_\sft$ and generate RFT datasets $\rftdata$, where each problem has atmost $4$ correct and diverse solutions. 
Note that while the number of questions in the SFT and RFT dataset remain the same, the RFT dataset contains more positive solutions per question. 
Since final answer verification is cheap, we are able to take advantage and have more self-generated solution traces in the RFT data mixture to improve coverage over responses. Note however that the number of synthetic problems and queries to highly capable models remain the same with both RFT and SFT. 
For completeness, we also study the setting when exactly one self-generated response per problem is provided in the RFT mixture and compare it to running SFT on data from more capable models later in this section.

We finetune the pretrained DeepSeek-Math-7B model on these new series of RFT datasets and plot the performance on GSM8K and MATH (Figure~\ref{fig:pos-scaling}(a,b)). First, \textcolor{lightblue}{\textbf{we observe that for any size of $\syndata$, the performance of the RFT model is better than the corresponding SFT model}}, and the difference remains consistent as we scale $\syndata$. 
Surprisingly, this indicates that training on multiple positive answer traces from the 7B $\pi_\sft(\by \mid \bx)$ can lead to better performing policies than training on a single trace from capable models. 

\textbf{What is the value of positives from $\pi_\sft(\by \mid \bx)$ compared to positives from ``oracle'' highly-capable models?} 
If sampling from $\pi_\sft$ also improves coverage and 
performance, then should we scale problems and solutions in $\syndata$, or just solutions in $\rftdata$? To answer this, we need to assess the importance of the RFT dataset $\rftdata$ in terms of $|\syndata|$. We do this by training SFT policies on $\syndata$ of sizes 8k and 16k, and then generating RFT datasets from the corresponding SFT policies where we only add more correct solution traces (for the same problems). In particular, for $\syndata$ with 8k and 16k questions, we scale the corresponding RFT datasets from 10k to 50k by adding positive solution traces from $\pi_\sft$ models trained on respective $\syndata$ where the solutions are from more capable models. For the RFT datasets constructed this way, we are fixing the number of questions and only scaling solutions,  unlike the RFT datasets in Figure~\ref{fig:pos-scaling}(a,b) where  both questions and answers scale. In Figure~\ref{fig:pos-scaling}(c) we plot the error rate of DeepSeek-Math-7B finetuned on the different sizes of $\rftdata$. 
Comparing the lowest values of the curves in Figure~\ref{fig:pos-scaling}(c) with $\syndata$ scaling in Figure~\ref{fig:pos-scaling}(a,b), we note that \textcolor{lightblue}{\textbf{performance from  $\rftdata$ is $2\times$ the size of $\syndata$ used to train $\pi_\sft$.}} 
 We also note that performance can plateau (or worsen in the case of GSM8K) as we scale up $\rftdata$ by a lot. 
 This is because $r(\cdot, \by)$ is unable to verify the correctness of each step a solution trace in $\rftdata$. We will show how these incorrect steps can induce spurious correlations that get amplified with further training, explaining this drop.

\begin{wrapfigure}{r}{0.38\textwidth}
    \vspace{-0.3cm}
    \captionsetup{font=footnotesize,skip=2pt}
  \centering
  \includegraphics[width=0.99\linewidth]{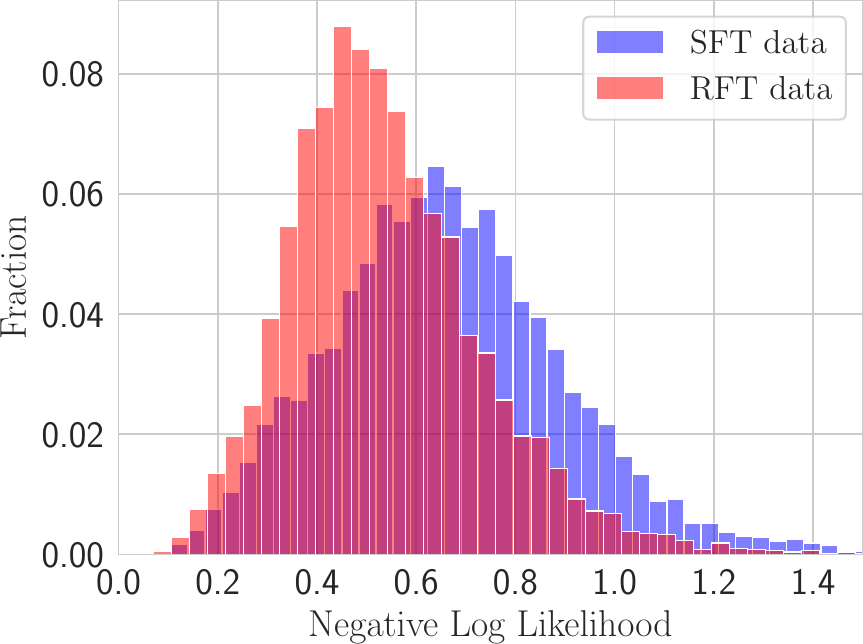}
  \caption{\label{fig:nothardtofit}\footnotesize{Under base LLM, RFT data sampled from $\pi_\sft$,  has higher likelihood than SFT data.}}
  \vspace{-0.3cm}
\end{wrapfigure}
\textbf{Why is self-generated positive data more sample-efficient?} 
From our result above, we find that solutions sampled from $\pi_\sft$ (trained on $\syndata$) yield better models, as good as those trained on $2\times |\syndata|$ problems.
This finding is surprising since one might expect more capable GPT-4/Gemini models to present better solutions (especially on problems generated from the same models), training on which should lead to good performance, akin to distillation~\citep{sharma2024critical}, but this is not the case.
Our results are consistent with the study of memorization  in 
LLMs~\citep{kang2024unfamiliar,hartmann2023sok, tirumala2022memorization}, which shows that pretrained (base) LLMs tend to memorize ``hard-to-fit'' and ``out-of-pretraining-distribution'' responses during finetuning, resulting in imperfect generalization. 
In contrast, correct response traces produced by $\pi_\sft$ on problems from $\syndata$ are not as hard-to-fit or as out-of-distribution, since they are obtained from a model that is ``close'' to the base LLM. We confirm this hypothesis 
with a histogram of negative log-likelihood values of the SFT and RFT data under the base LLM  (Figure~\ref{fig:nothardtofit}).
Hence, we expect STaR/RFT to alleviate the memorization problem on a large chunk of examples. 
This finding also corroborates \citet{yuan2023scaling}'s result that lower the perplexity of SFT data under the base model, the smaller the gap between SFT and RFT performance.

\begin{wrapfigure}{r}{0.4\textwidth}
    \vspace{-0.3cm}
    \captionsetup{font=footnotesize,skip=2pt}
  \centering
  \includegraphics[width=0.99\linewidth]{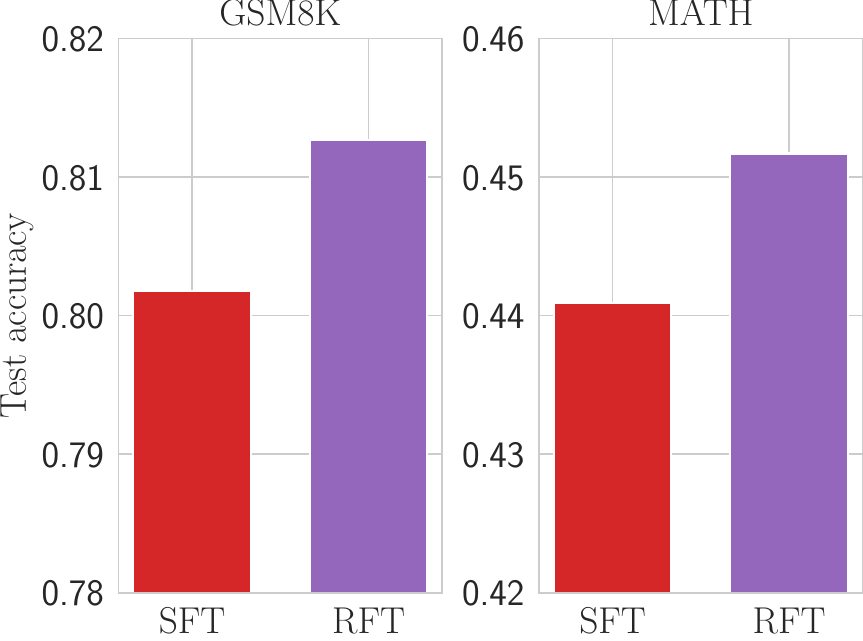}\caption{\label{fig:sft_rft_single}\footnotesize{RFT data with a single (self-generated) correct solution per problem outperforms SFT data (from highly-capable models) of the same size. 
  }}
  \vspace{-0.35cm}
\end{wrapfigure}
Note that one may also attribute better performance of RFT to improved coverage from multiple responses in $\rftdata$ for each question in $\syndata$. We find that even when RFT data is restricted to one solution per question, base LLMs finetuned on it outperform SFT  consistently by $>1\%$. 
In Figure~\ref{fig:sft_rft_single}, we plot the performance of DeepSeek-Math-7B finetuned on SFT data $\syndata$ and RFT data $\rftdata$ where $\rftdata$ has the same questions as $\syndata$, and only one positive solution per question, sampled from $\pi_\sft$ finetuned on $\syndata$. Thus, both SFT and RFT datasets are of the same size. 
This means that a significant portion of the $2\times$ sample efficiency gains we observe for RFT in Figure~\ref{fig:pos-scaling}(a,b) can  be attributed to RFT data from $\pi_\sft$ being easier-to-fit, and not purely because RFT data improves coverage by finetuning on multiple solution traces per question.

\textbf{SFT/RFT policy suffers from spurious correlations in positive synthetic data.} 
While RFT data maybe ``easier-to-fit'', in Figure~\ref{fig:pos-scaling}(c) we also note that continuing to scale RFT data leads to test error saturation, or even worse test error. This is unlike scaling of problems and solutions in SFT data (in Figure~\ref{fig:pos-scaling}(a,b)). 
\begin{wrapfigure}{r}{0.4\textwidth}
\captionsetup{font=footnotesize,skip=0pt}
\footnotesize
  \centering
  \vspace{-0.3cm}
  \includegraphics[width=0.99\linewidth]{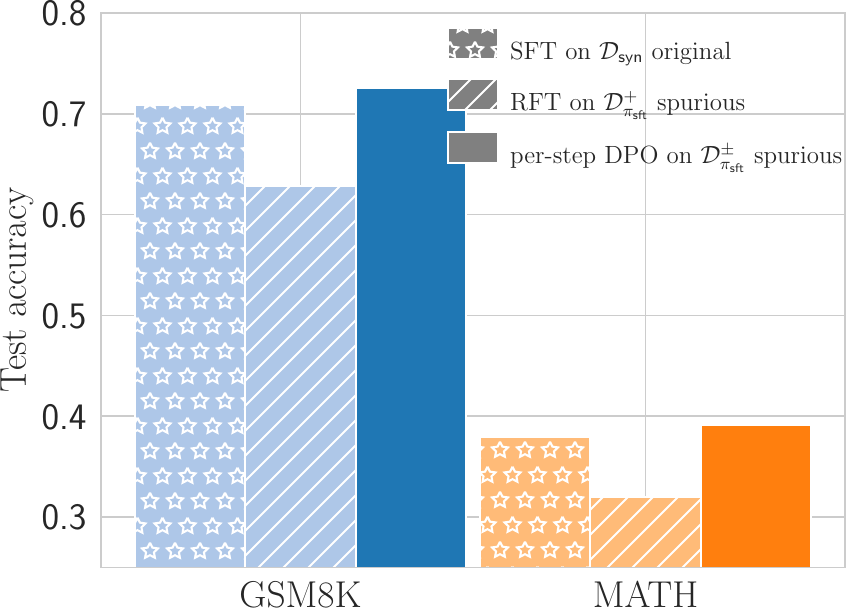}
  \caption{Running RFT on positive model-generated  data $\rftdata$ with spurious steps can be worse than running SFT on $\syndata$. Later we will discuss how this failure mode is fixed by per-step credit assignment with per-step DPO.}
  \vspace{-0.38cm}
  \label{fig:spurious}
\end{wrapfigure}
This failure can be attributed to the presence of incorrect/irrelevant steps that are not detected by our verifier, since it only verifies the final answer (see Appendix~\ref{sec:implementation},~\ref{app:spurious-examples} for examples). For a problem $\bx$, when the 
LLM is trained with supervised next-token prediction on some positive sub-optimal $\by$ in the RFT data, with incorrect or irrelevant step $\by_{k}$, it is likely to develop incorrect causal associations between such ``spurious'' steps appearing in a prefix of the solution $\by_{1:k}$, and the following valid step $\by_{k+1}$, when trying to maximize $\pi(\by_{k+1} | \by_{1:k}, \bx)$. This is well known as the problem of ``causal confusion''~\citep{de2019causal} in the literature on imitation learning. To verify this hypothesis, we amplify the presence of these spurious steps. Specifically, for each question in $\syndata$ we sample ``spurious steps'' from $\pi_\sft$ trained on it, \ie steps which lead to the incorrect answer with high probability under $\pi_\sft$ (we sample multiple completions conditioned on the same spurious step to check how likely it leads to the correct final answer). Then, we interleave the solution traces in the RFT data with these spurious steps. Note, that all traces in the RFT data are still positive since, they all lead to the correct answer eventually. We find that an LLM trained on this spurious RFT data performs worse than the $\pi_\sft$ policy itself, often producing more of such incorrect steps in our qualitative probes (Appendix~\ref{app:spurious-examples}). 

\begin{AIbox}{Takeaways for scaling positive synthetic data}
\begin{itemize}[leftmargin=0em,noitemsep]
    \item While positive data from GPT-4/Gemini-1.5 improves coverage over new problems and solutions, positive data from SFT policy trained on it is $2\times$ more sample efficient.
    \item When positive data from $\pi_\sft$ contains spurious steps, scaling synthetic data leads to worse test errors.
\end{itemize}
\end{AIbox}

\vspace{-0.2cm}
\section{Negative Synthetic Data Enables Per-Step Credit Assignment}
\label{sec:negative-synth-expts}
\vspace{-0.1cm}
We saw in Section~\ref{sec:positive-synth-expts} that fine-tuning on model-generated data can reinforce the model's tendency to produce irrelevant or incorrect steps that are able to still steer the model towards the correct response on some training problems, but derail it otherwise. Since supervised fine-tuning on positive data with a low-cost verifier alone is unable to identify such problematic steps either, we would not expect this issue to go away even as the amount of synthetic positive data is scaled up substantially. 

Perhaps surprisingly, we show that training on \textbf{\emph{negative data}} for the same set of synthetic problems can address the failure modes of positive data as long as this negative data is carefully constructed. In particular, the construction of negatives should enable us to perform \emph{per-step credit assignment}. We will develop a conceptual model to do so in this section and build a recipe for obtaining per-step credits. 
We will formally show how DPO on a specific preference pair distribution implicitly conforms to our model of per-step credits.
We also show that per-step DPO  from Section~\ref{sec:preliminaries} is an approximate variant of this more general conceptual idea. Consequently, using per-step DPO we empirically verify the efficacy of negative data in enabling per-step credit assignment and its impact on the scaling laws.
Finally, we discuss how our conceptual model of using negatives is equivalent to advantage-weighted RL and explain the generalization benefits of using negative data by exploiting this connection, both theoretically and intuitively via a study on a didactic problem.

\vspace{-0.15cm}
\subsection{Conceptual Model: Constructing Negatives to Enable Per-Step Credit Assignment}
\label{sec:conceptua_model}
\vspace{-0.15cm}

While na\"ively contrasing an entire positive response $\hatposby$ against an entire negative response $\hatnegby$ will increase the likelihood of \emph{each} step that appears in $\hatposby$ (even when incorrect or irrelevant) and reduce likelihood on each step appearing in $\hatnegby$ (even when accurate and relevant), it does not account for the importance of each step. Formally, given a negative solution trace $\hatnegby$, we would want to identify \emph{critical} steps where the model introduces a flaw $\hatnegby_i$, and emphasize alternate correct completions starting from the previous state $(\bx, \hatnegby_{1:i-1})$, that the model could have still produced. Likewise, given a positive solution trace, $\hatposby$, we would like to identify if a given step $\hatposby_{i}$ does not make progress towards the solution by identifying if there exist more likely to succeed alternatives from its predecessor steps, $\hatposby_{1:i-1}$, which now presents a key decision-making point. \textbf{What are these critical steps and how can we identify them procedurally?}

\textbf{Value functions.} We formalize the notion of per-step credit using value functions from reinforcement learning (RL). Recall that both $\hatposby$ and $\hatnegby$ are sampled from $\pi_\sft$. For problem $\bx$, with correct solution $\by$, a response $\hatby$ with a sequence of steps $\hatby_{1:i-1}$ (state), and a candidate step $\hatby_{i}$ (action), we define the  Q-function under policy $\tilde{\pi}$ in Equation~\ref{eq:optimal_value_function}. 
\begin{align}
    \label{eq:optimal_value_function}
    Q_{\tilde{\pi}}(\underbrace{\bx, \hatby_{1:i-1}}_{\text{state}}; \underbrace{\hatby_{i}}_{\text{action}} ) =\underbrace{
    \mathbb{E}_{{\by^{\mathrm{new}}_{i+1:L} \sim \tilde{\pi}(\cdot|\bx, \hatby_{1:i})}} \Big[ r\left( [\hatby_{1:i}, \by^{\mathrm{new}}_{i+1:L}], \by \right) \Big]}_{\text{expected future reward under new actions (i.e., steps) sampled by policy }\tilde{\pi}}
\end{align}
To interpret this expression, for any partial solution upto $i-1$ steps (current state: $(\bx, \hatby_{1:i-1})$), and the next step $i$ (next action: $\hatby_{i}$), the above Q-function evaluates the probability of success given the remaining budget of $L - i$ more steps, in expectation over all possible futures sampled from some policy $\tilde{\pi}$. Our conceptual model treats the policy $\tilde{\pi}$ as an algorithmic design choice that can differ for algorithms using negative data. As we see later, choosing $\tilde{\pi}$ as the best-of-K distribution around $\pi_\sft$ (denoted as $\bok(\pi_\sft)$) enables a particularly interesting tradeoff between $Q$-value estimation and policy improvement. Another common choice is $\pi_\sft$ itself. 
Now, for any given step $\hatby_{i}$, we can define its \textbf{\emph{advantage}} as the relative change in $Q_{\tilde{\pi}}$ when adding step $\hatby_{i}$ in comparison with other possible candidates for step $i$ as follows:
\begin{align}
    \label{eq:advantage}
    A_{\tilde{\pi}}(\bx, \hatby_{1:i-1}; \hatby_{i}) = Q_{\tilde{\pi}}(\bx, \hatby_{1:i-1}; \hatby_{i}) - Q_{\tilde{\pi}}(\bx, \hatby_{1:i-2}; \hatby_{i-1}). 
\end{align}
Equation~\ref{eq:advantage} is identical to the definition of advantage of an action (\ie $\hatby_{i}$) at a state ($\bx, \hatby_{1:i-1}$) from reinforcement learning~\citep{suttonrlbook}. It is the gap between the Q-value of a state-action pair and the value function of the state (which is equal to the Q-value of the \emph{previous} step due to deterministic dynamics).

\begin{wrapfigure}{r}{0.71\linewidth}
    \vspace{-0.37cm}
\captionsetup{font=footnotesize,skip=0.5pt}
    \includegraphics[width=0.99\linewidth]{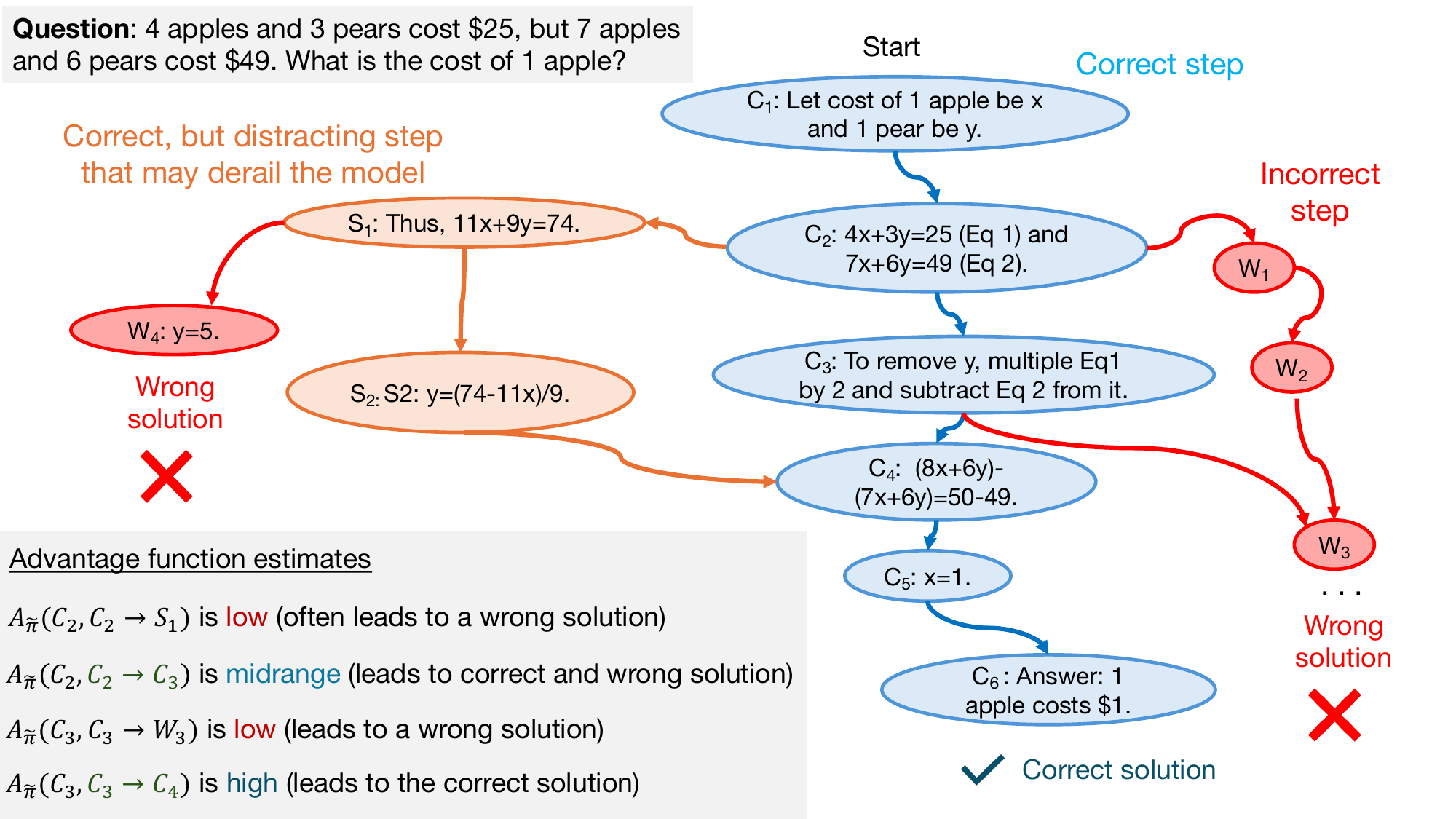}
    \caption{\footnotesize{Illustration of advantage estimation from negative data on a didactic example in synthetic model generations. Critical steps are those with high advantage values.
    }\vspace{-0.25cm}}
    \label{fig:critical_states}
\end{wrapfigure}
\textbf{Critical steps, per-step DPO, and advantage-weighted RL.} We can use advantages (Equation~\ref{eq:advantage}) to characterize critical steps. 
\textcolor{lightblue}{{Steps that attain a higher advantage value than others are \textbf{critical} since they need to be generated more precisely to solve the problem. In contrast, steps that with very low advantage values are likely worse and must be unlearned.}}
Our definition of the advantage function implies that one can calculate advantages for each step in a response via additional Monte Carlo rollouts starting from prefixes defined by partial solutions. 
One could then use these advantage estimates (Equation~\ref{eq:advantage}) for training the model, for example, by running advantage-weighted reinforcement learning~\citep{peng2019advantage}. An alternate option would be to skip the computation of advantage estimates but instead rely on implicit approaches that optimize the advantage-weighted objective without computing their values. Theorem~\ref{thm:equivalence} shows that DPO performed over a precise pair distribution contrasting positive and negative traces obtained via additional rollouts from $\tilde{\pi}$, on prefixes of a response sampled from $\pi_\sft$ 
is equivalent to advantage-weighted RL.
A proof of Theorem~\ref{thm:equivalence} is in Appendix~\ref{sec:proof-ofequiv}. Note that unlike the standard reduction of DPO to the RL objective under \emph{some} reward function~\citep{rafailov2023direct,rafailov2024r}, Theorem~\ref{thm:equivalence} is stronger in that it identifies the value function induced by per-step DPO. 
\begin{theorem}[Equivalence of advantage-weighted RL and DPO with per-step pairs]
\label{thm:equivalence}
    The optimal policy from Equation~\ref{eq:dpo_objective} with $\dpodata$ given by $(\bx, [\by_{1:i}, \posby_{i+1}], [\by_{1:i}, \negby_{i+1}])$ where the positive and negative traces share prefix $\by_{1:i} \sim \pi_\sft$, and $\negby_{i+1}\sim \pi_\sft(\cdot | \bx, \by_{1:i})$, $\posby_{i+1} \sim \sigma(A_{\tilde{\pi}}(\bx, \by_{1:i}; \cdot)- A_{\tilde{\pi}}(\bx, \by_{1:i}; \negby_{i+1}))$,  
    is identical to the optima of the advantage-weighted RL objective:
    \begin{align}
        \max_{\pi}~~& \mathbb{E}_{\bx \sim p_\mathrm{syn}(\bx), \by  \sim \pi_\sft(\cdot|\bx)} \left[ \sum_{i=1}^L \log \pi(\by_{i} \big| \bx, \by_{0:i-1}) \cdot \exp \left( A_{\tilde{\pi}}(\bx, \by_{0:i-1}; \by_{i}) / \beta \right) \right]. \vspace{-0.5cm}
        \label{eq:advanage_weighted_objective}
    \end{align}
\end{theorem}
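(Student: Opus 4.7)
The plan is to reduce the per-step DPO objective to a sequence of independent contextual-bandit problems, one per prefix $(\bx, \by_{1:i})$, and then show that the closed-form optimum at each context coincides with the closed-form optimum of the advantage-weighted RL objective at that same context. The crucial structural observation is that in the pair $(\bx, [\by_{1:i}, \posby_{i+1}], [\by_{1:i}, \negby_{i+1}])$ the two responses share the prefix $\by_{1:i}$, so writing $\beta \log \pi(\by|\bx)/\pi_\sft(\by|\bx)$ as a sum of per-step log-ratios causes the shared-prefix contributions to cancel inside the sigmoid of Equation~\ref{eq:dpo_objective}. What remains depends only on the single-step conditionals $\pi(\cdot|\bx, \by_{1:i})$ and $\pi_\sft(\cdot|\bx, \by_{1:i})$.

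First I would interpret the sampling of $\posby_{i+1}$ from $\sigma(A_{\tilde{\pi}}(\bx,\by_{1:i};\cdot)-A_{\tilde{\pi}}(\bx,\by_{1:i};\negby_{i+1}))$ as generating preferences under a Bradley--Terry model whose latent reward is the per-step advantage $A_{\tilde\pi}(\bx,\by_{1:i};\cdot)$. Then, by the standard DPO argument \citep{rafailov2023direct}, the DPO loss at context $(\bx,\by_{1:i})$ is equivalent to maximum-likelihood fitting of an implicit reward $\hat r_\pi(\by_{i+1}) = \beta \log \pi(\by_{i+1}|\bx,\by_{1:i})/\pi_\sft(\by_{i+1}|\bx,\by_{1:i})$ to a Bradley--Terry model with true reward $A_{\tilde\pi}(\bx,\by_{1:i};\cdot)$. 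The minimizer satisfies $\hat r_{\pi^\star} = A_{\tilde\pi} + C(\bx,\by_{1:i})$ for some context-dependent constant, which gives
\begin{equation*}
\pi^\star(\by_{i+1}\mid \bx,\by_{1:i}) \;\propto\; \pi_\sft(\by_{i+1}\mid\bx,\by_{1:i})\,\exp\!\big(A_{\tilde\pi}(\bx,\by_{1:i};\by_{i+1})/\beta\big).
\end{equation*}

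Second I would analyze the right-hand side (\ref{eq:advanage_weighted_objective}). Since the outer expectation draws $\by\sim\pi_\sft$ and the per-step log-likelihood factorizes, I can decompose the objective context-by-context: for each prefix $(\bx,\by_{1:i-1})$ the inner problem is to maximize $\sum_{\by_i}\pi_\sft(\by_i|\bx,\by_{1:i-1})\log\pi(\by_i|\bx,\by_{1:i-1})\exp(A_{\tilde\pi}(\bx,\by_{1:i-1};\by_i)/\beta)$ over the simplex on $\by_i$. A one-line Lagrangian (or the Gibbs inequality applied to the unnormalized weights) yields the same closed form $\pi^\star(\by_i|\bx,\by_{1:i-1})\propto\pi_\sft(\by_i|\bx,\by_{1:i-1})\exp(A_{\tilde\pi}/\beta)$. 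Matching the two closed forms across every prefix, and invoking the full support of $\pi_\sft$ on the sampled prefixes to ensure the optima are identified on the relevant contexts, completes the proof.

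The main obstacle I expect is the careful interpretation and bookkeeping around the per-step preference construction. In particular, one must justify that the shared-prefix cancellation really does collapse the DPO loss to a product of independent per-step problems (as opposed to a coupled objective across $i$), and must verify that the conditional sampling ``$\posby_{i+1}\sim\sigma(A(\cdot)-A(\negby_{i+1}))$'' is consistent with a Bradley--Terry likelihood so that the standard DPO reparametrization applies cleanly; handling the context-dependent normalizer $C(\bx,\by_{1:i})$ and confirming it drops out when one renormalizes $\pi^\star$ into a valid conditional distribution is the last subtle step. Everything else reduces to the textbook KL-regularized maximization and a weighted-MLE computation, which I would defer to Appendix~\ref{sec:proof-ofequiv}.
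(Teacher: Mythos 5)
Your proposal is correct and follows essentially the same route as the paper's proof: interpret the per-step preference sampling as a Bradley--Terry model whose latent reward is the advantage $A_{\tilde{\pi}}$, invoke the standard DPO reparametrization to get $\pi^\star(\cdot\mid\bx,\by_{1:i})\propto\pi_\sft(\cdot\mid\bx,\by_{1:i})\exp(A_{\tilde{\pi}}/\beta)$, and match this against the closed-form maximizer of the advantage-weighted objective. Your treatment is somewhat more explicit than the paper's on two points it leaves implicit --- the shared-prefix cancellation that decouples the DPO loss across steps, and the Lagrangian derivation of the weighted-MLE optimum of Equation~\ref{eq:advanage_weighted_objective} --- but these are refinements of the same argument rather than a different one.
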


\textbf{Practical instantation of DPO with per-step pairs.} In most of our experiments, we instantiate a practical version of the above framework, following the scheme in \citet{hwang2024self}. This is a special case (Part 1) of the complete algorithm shown in Algorithm~\ref{alg:per-step-dpo}. Unless otherwise mentioned, we use ``per-step DPO'' to refer to this version (Part 1 only) in practice. We will also experiment with the complete version (parts 1 and 2) later in Section~\ref{sec:Exp_with_part2}.
Instead of computing advantage estimates for each step, and then sampling preference pairs, as described in Theorem~\ref{thm:equivalence}, we approximate this by only Q-value estimates on $8$ negative responses for each question in the synthetic dataset,
with $\tilde{\pi}$ chosen to be the best-of-K policy, $\bok(\pi_\sft)$ where $K=5$. There are two benefits associated with this choice of $\tilde{\pi}$, especially a higher value of $K$: \textbf{(i)} estimating the advantage in 
 Equation~\ref{eq:advantage} with Monte Carlo rollouts exhibits lower variance when $K$ is large, since a larger budget $K$ would lead most steps to have higher Q-values and the variance of Bernoulli reduces as Q-value $\rightarrow$ $1$;
 and \textbf{(ii)} $Q_{\bok({\pi_\sft})}$ is a non-decreasing function in $K$ for any state-action, which implies that the solution of advantage-weighted RL objective, in principle, can now improve over a better policy $\bok({\pi_\sft})$, compared to $\pi_\sft$. 
 We will next discuss scaling results for negative data, and then in Section~\ref{sec:generalization} show how per-step credit assignment improves generalization and suppresses irrelevant and incorrect steps appearing in a response, making it possible to make more use of the same synthetic data. 
 {
\begin{algorithm}[H]
\caption{Per-step DPO (\textcolor{lightblue}{Part 1:} Practical version for most experiments; \textcolor{lightblue}{Parts 1 + 2:} Complete version)}
\label{alg:per-step-dpo}
\begin{algorithmic}[1]
\Require{Synthetic dataset: $\syndata$, SFT policy finetuned on $\syndata$: $\pi_\sft$, sampling policy $\tilde{\pi}$.}
\State Initialize per-step DPO dataset $\dpodata \gets \{\}$.
\For {$(\bx, \by) \in \syndata \cup \rftdata$}
    \State \textcolor{lightblue}{\# \textbf{Part 1: }Identify critical steps in incorrect responses}
    \State Sample multiple incorrect answers $\hatnegby \sim \pi_\sft(\cdot \mid \bx)$, and collect them in set $\mathcal{C}(\bx)$.
    \For {$\hatnegby := \brck{\hatnegby_1, \ldots, \hatnegby_L} \in \mathcal{C}(\bx)$}
        \State Compute the Monte Carlo estimate for $Q_{\tilde{\pi}}(\bx, \hatnegby_{1:i-1}; \hatnegby_i)$ for each step $\hatnegby_i$.
        \State If $\hatnegby_c$ is the first step with least $Q_{\tilde{\pi}}(\bx, \hatnegby_{1:i-1}; \hatnegby_i)$, then $\dpodata \gets \dpodata \cup \{(\bx, \by, \hatnegby_{1:c})\}$.
    \EndFor
    \State \textcolor{lightblue}{\# \textbf{Part 2: }Identify spurious steps in correct responses}
    \State Sample multiple correct answers $\hatposby \sim \pi_\sft(\cdot \mid \bx)$, and collect them in set $\mathcal{C}'(\bx)$.
    \For {$\hatposby := \brck{\hatposby_1, \ldots, \hatposby_L} \in \mathcal{C}'(\bx)$}
        \State Compute the Monte Carlo estimate for $Q_{\tilde{\pi}}(\bx, \hatposby_{1:i-1};\hatposby_i)$ for each step $\hatposby_i$.
        \State If $\hatposby_c$ is the first step with least $Q_{\tilde{\pi}}(\bx, \hatposby_{1:i-1}; \hatposby_i)$, then $\dpodata \gets \dpodata \cup \{(\bx, \by, \hatposby_{1:c})\}$.
    \EndFor
\EndFor
\State Optimize DPO loss in Equation~\eqref{eq:dpo_objective} on $\dpodata$ with $\pi_\sft$ as the reference policy.
\end{algorithmic}
\end{algorithm}
\vspace{-0.5cm}
}

\begin{figure}[t]
    \captionsetup{font=small,skip=2pt}
    \centering
    \begin{subfigure}[b]{0.33\textwidth}
        \captionsetup{font=small,skip=0pt}
        \centering
        \includegraphics[width=0.99\textwidth]{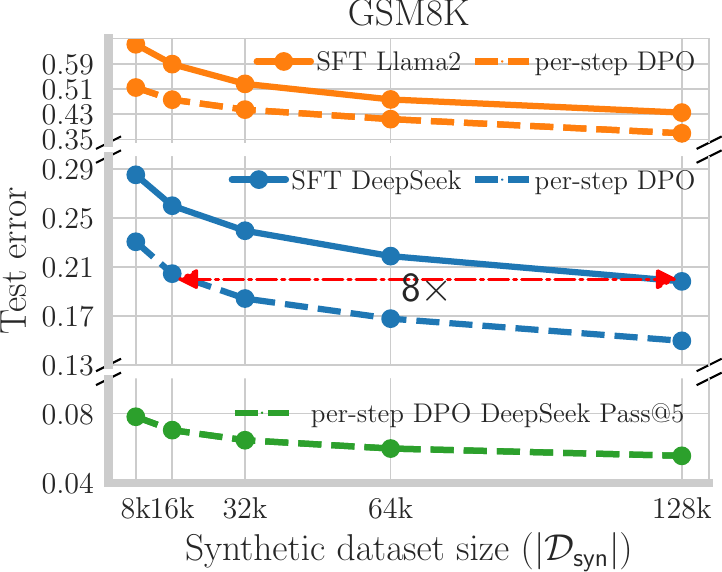}
        \caption*{(a)}
    \end{subfigure}\hfill
    \begin{subfigure}[b]{0.33\textwidth}
        \captionsetup{font=small,skip=0pt}
        \centering
        \includegraphics[width=0.99\textwidth]{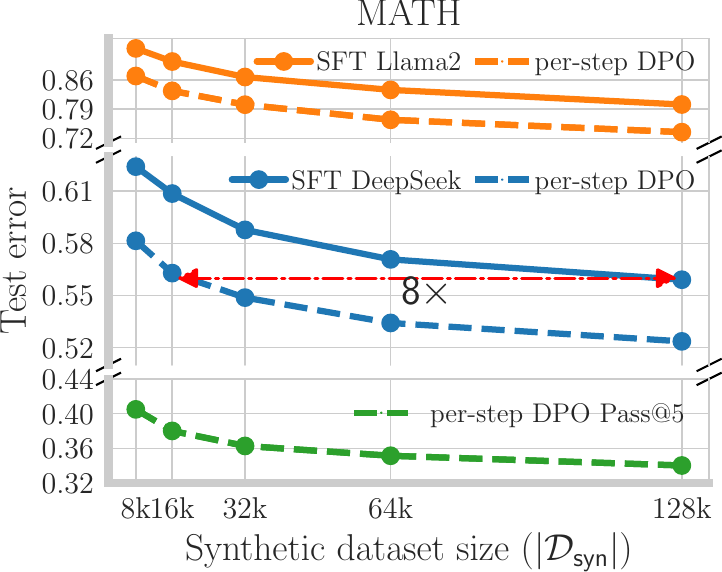}
        \caption*{(b)}
    \end{subfigure}\hfill
    \begin{subfigure}[b]{0.32\textwidth}
        \captionsetup{font=small,skip=0pt}
        \centering
        \includegraphics[width=0.99\textwidth]{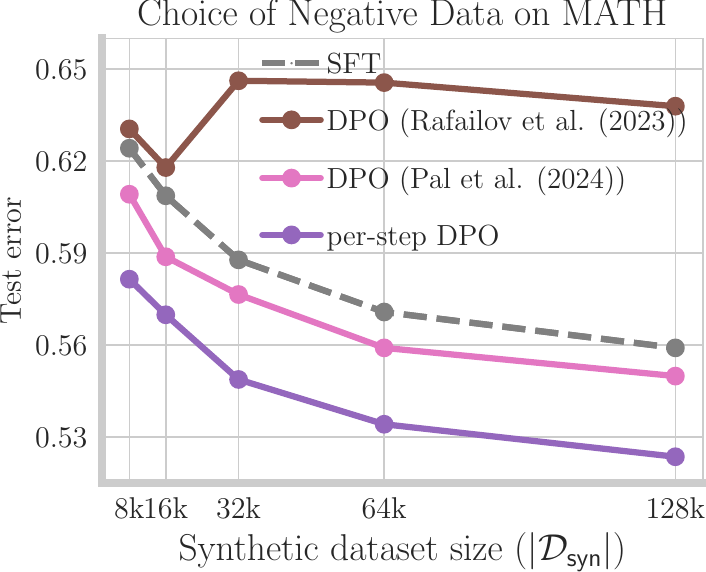}
        \caption*{(c)}
    \end{subfigure}
    \caption{\footnotesize{\textbf{\emph{Negative data scaling laws:}}} We evaluate algorithms that consume negative data as we scale $\syndata$, and compare them with only positive training (SFT) on $\syndata$. On GSM8K (a) and MATH (b), we observe an $8 \times$ gain from per-step DPO (Section~\ref{sec:learning}) which aligns with our model of negative data that enables per-step credit assignment. In (c) we compare different negative data construction algorithms, and particularly note that na\"ively pairing positives and negatives~\cite{rafailov2023direct} leads to worse performance as we scale the number of synthetic problems in $\syndata$.}
    \label{fig:neg-scaling}
    \vspace{-0.4cm}
\end{figure}

\vspace{-0.2cm}
\subsection{Scaling Results for Negative Data} 
\vspace{-0.2cm}
\label{subsec:scaling_results_neg_data}
We present the results for scaling up negative data for training the DeepSeek-Math-7B and LLama2-7B models on GSM8K and MATH in Figure~\ref{fig:neg-scaling}(a,b). 
Observe in Figure~\ref{fig:neg-scaling}(a,b), that for both DeepSeek-Math-7B and LLama2-7B models, per-step DPO improves performance beyond the SFT policy and the performance continues to scale favorably as data size increases. In fact, also note that for any given size of $\syndata$, per-step DPO also substantially improves over RFT (Figure~\ref{fig:pos-scaling}) on both datasets, and overall, \textcolor{lightblue}{\textbf{while RFT improved effective data size of $\syndata$ by $2\times$, additionally training on negative data extends the performance improvement to $8\times$ the size of $\syndata$.}} Additionally, since per-step DPO estimates advantage of each step under the Best-of-K policy (with $K=5$ in this setting), one might expect a saturation in the pass@5 performance of the per-step DPO solution. On the contrary, we find that pass@5 performance also improves consistently.

\textbf{Choice of negative data has significant impact.} 
In Figure~\ref{fig:neg-scaling}(c) we plot negative data scaling laws where the choice of negative data (and thereby pairs for DPO in Equation~\ref{eq:dpo_objective}) differs.  
Observe that standard pairing of positive and negative responses in $\dpodata$ for DPO~\citep{rafailov2023direct} does not improve upon the SFT policy. As such, we needed to tune $\beta$ in Equation~\ref{eq:dpo_objective} for DPO but could not fully avoid performance degradation. 
Our conceptual model explains this result: since contrasting arbitrary positives and negatives would result in an incorrect induced advantage function, training with DPO will exacerbate spurious correlations that maximize this induced advantage function~\cite{saeidi2024insights,pang2024iterative,xu2024contrastive}. In fact, \citet{pal2024smaug} also find similar concerns with random pairing and instead pair positives and negatives that with highest edit distance, which leads to some improvement over standard DPO (Figure~\ref{fig:pos-scaling}(c)) but still performs poorer than per-step DPO that accounts for credit.

\begin{AIbox}{Takeaways for scaling negative synthetic data}
\begin{itemize}[leftmargin=0em,noitemsep]
    \item Negative data can identify high-advantage (critical) steps in model-generated responses.
    \item  We can construct negative data distribution that equates DPO to advantage-weighted RL. Negative data used in this way improves the sample efficiency of synthetic data by $8 \times$. 
\end{itemize}
\end{AIbox}

\vspace{-0.15cm}
\subsection{Why Does Credit Assignment from Negative Data Improve Model Generalization?} 
\label{sec:generalization}
\vspace{-0.15cm}
Recall that when we sample preference pairs from the  distribution in Theorem~\ref{thm:equivalence}, per-step DPO is equivalent to advantage-weighted RL. 
In other words, our conceptual model illustrates that per-step DPO can perform credit assignment, 
and identify critical steps over irrelevant or incorrect ones by computing advantage estimates.
We see that this improves test performance and scaling trends (in Section~\ref{subsec:scaling_results_neg_data}) over SFT and standard DPO~\cite{rafailov2023direct} with the same data. To explain these results, we need to understand the generalization gains afforded by the advantage-weighted RL objective.

\vspace{-0.2cm}
\subsubsection{Advantage-Weighted RL De-Emphasizes Spurious Steps and Emphasizes Critical Steps}
\vspace{-0.2cm}

We first explain how \emph{online} model-specific interventions and advantage estimation would emphasize critical steps and de-emphasize spurious ones. Consider $\tilde{\pi} = \pi_\sft$, \textit{i.e.}, the advantage estimates are computed under
 \begin{wrapfigure}{r}{0.4\textwidth}
 \vspace{-0.25cm}
  \captionsetup{font=footnotesize,skip=3pt}
  \centering
  \includegraphics[width=0.99\linewidth]{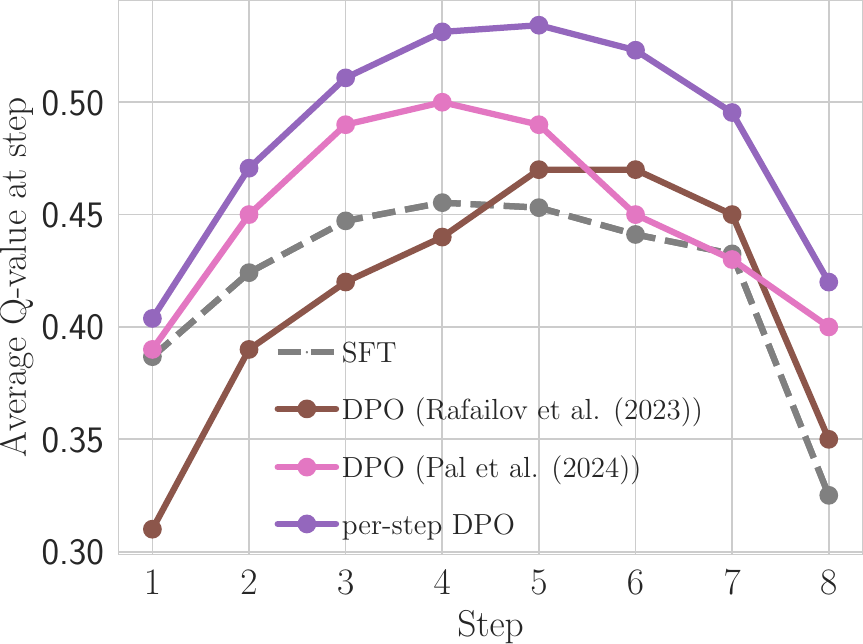}
  \caption{\label{fig:qvals} \footnotesize{Per-step DPO improves Q-values at each step, standard DPO only improves at irrelevant steps.\vspace{-1.0em}}}
\end{wrapfigure}
the base policy itself. If some spurious (\textit{i.e.}, incorrect or distracting) step $\by_s$ is imperfectly cloned under $\pi_\sft$, then while the average teacher-forcing loss is misleadingly low, the loss on just the spurious step would remain high. This would mean that when using autoregressive decoding during inference, the rollouts may  not replicate steps in the training data perfectly. Due to this imperfect replication, spurious steps will present low advantage estimates: concretely, if a step $\by_s$ is imperfectly cloned and enough samples are drawn from $\pi_\sft$ conditioned on $\by_{1:s}$, many of them would lead to incorrect final answers. On the other hand, if all steps in the SFT data are perfectly cloned, then during inference, the rollouts would reproduce the solutions that end with correct answers. Thus, all steps in this solution would have a higher advantage estimates. Thus in this case, training with RL would de-emphasize spurious steps and emphasize critical steps. 

To verify this explanation in practice, we recall that running per-step training with negative data generated by the RFT model that has overfit on distracting steps improved accuracy by $>$$6\%$ in Figure~\ref{fig:spurious}. We visualize advantages against real examples appearing in this fine-tuning dataset in Appendix~\ref{app:spurious-examples}, and observe that advantage estimates indeed de-emphasize spurious steps. Aggregating over examples, in Figure~\ref{fig:qvals}, we plot the average Q-value of a step for different negative data schemes, and note that only per-step DPO improves the Q-values attained at every step compared to SFT, as expected based on the connection to advantage-weighted RL (Theorem~\ref{thm:equivalence}), indicating that per-step DPO trains the model to emphasize the more critical steps.

\begin{figure}[t]
    \centering
    {\footnotesize

\begin{tcolorbox}[colback=blue!5!white, colframe=blue!75!black, title=Advantage Estimates Computed Over a Positive Model Generated Sample]
\textbf{Question:} 

Betty wants a new wallet which costs \$100. Betty has only half of the money she needs. Her parents give her \$15 for that purpose, and her grandparents twice as much as her parents. How much more money does Betty need?

\textbf{Positive response with identified spurious step: }

{\setlength{\fboxsep}{0pt}\colorbox{blue!5!white}{\parbox{\textwidth}{
\colorbox{red!19.9}{\strut Betty's} \colorbox{red!19.9}{\strut parents} \colorbox{red!19.9}{\strut gave} \colorbox{red!19.9}{\strut her} \colorbox{red!19.9}{\strut 15,} \colorbox{red!19.9}{\strut so} \colorbox{red!19.9}{\strut her} \colorbox{red!19.9}{\strut grandparents} \colorbox{red!19.9}{\strut gave} \colorbox{red!19.9}{\strut her} \colorbox{red!19.9}{\strut 2} \colorbox{red!19.9}{\strut *} \colorbox{red!19.9}{\strut 15} \colorbox{red!19.9}{\strut =} \colorbox{red!19.9}{\strut <<15*2=30>>30.} \colorbox{red!16.94}{\strut In} \colorbox{red!16.94}{\strut total,} \colorbox{red!16.94}{\strut Betty} \colorbox{red!16.94}{\strut received} \colorbox{red!16.94}{\strut 30} \colorbox{red!16.94}{\strut +} \colorbox{red!16.94}{\strut 15} \colorbox{red!16.94}{\strut =} \colorbox{red!16.94}{\strut <<30+15=45>>45.} \colorbox{red!62.96}{\strut The} \colorbox{red!62.96}{\strut total} \colorbox{red!62.96}{\strut amount} \colorbox{red!62.96}{\strut of} \colorbox{red!62.96}{\strut money} \colorbox{red!62.96}{\strut Betty} \colorbox{red!62.96}{\strut needs} \colorbox{red!62.96}{\strut is} \colorbox{red!62.96}{\strut 100} \colorbox{red!62.96}{\strut *} \colorbox{red!62.96}{\strut 2} \colorbox{red!62.96}{\strut =} \colorbox{red!62.96}{\strut <<100*2=200>>200.} \colorbox{red!23.31}{\strut Betty} \colorbox{red!23.31}{\strut needs} \colorbox{red!23.31}{\strut 100} \colorbox{red!23.31}{\strut /} \colorbox{red!23.31}{\strut 2} \colorbox{red!23.31}{\strut =} \colorbox{red!23.31}{\strut <<100/2=50>>50} \colorbox{red!23.31}{\strut in} \colorbox{red!23.31}{\strut total.} \colorbox{red!37.06}{\strut Betty} \colorbox{red!37.06}{\strut still} \colorbox{red!37.06}{\strut needs} \colorbox{red!37.06}{\strut to} \colorbox{red!37.06}{\strut save} \colorbox{red!37.06}{\strut 50} \colorbox{red!37.06}{\strut -} \colorbox{red!37.06}{\strut 45} \colorbox{red!37.06}{\strut =} \colorbox{red!37.06}{\strut <<50-45=5>>5.} \colorbox{red!24.35}{\strut The} \colorbox{red!24.35}{\strut answer} \colorbox{red!24.35}{\strut is} \colorbox{red!24.35}{\strut 5} 
}}}
\end{tcolorbox}
    }
    \vspace{-1em}
\caption{\footnotesize\textit{\textbf{Spurious steps have low advantage values:}} For each step in a correct (positive) response sampled from the SFT policy we compute the advantage values (lower values are darker) and find that the advantage estimates indeed identify the spurious step. \vspace{-1em}}
    \label{fig:enter-label}
\end{figure}

\vspace{-0.2cm}
\subsubsection{Why Does Generalization Improve?: Connecting Advantage-Weighted RL to DRO}
\label{sec:dro}
\vspace{-0.15cm}

In the previous section, we discussed how advantage-weighted RL preferentially weighs the next-token prediction loss at each step. Now, we attempt to conceptually understand why this could improve generalization. For this, we present an intuitive explanation by drawing a connection between advantage-weighted RL and a distributionally robust optimization (DRO) algorithm, named Group DRO, commonly used to improve worst-group robustness in supervised learning~\cite{rahimian2019distributionally}.

\textbf{Intuitive explanation.} During inference, the SFT policy can fail even on training problems, especially in scenarios where the SFT policy has failed to perfectly clone the next step at each intermediate step in the SFT data. As previously discussed, these steps also present with low advantage values. One way to reduce the chance of compounding inference time errors~\cite{ross2010efficient} is to preferentially minimize the negative log-likelihood loss \emph{more} for the critical step, i.e., those steps from where the model is more likely to arrive at a wrong answer. If we iteratively update the policy with gradient steps computed over a re-weighted next-step prediction objective where each step is weighted by its advantage estimate, then the resulting algorithm intuitively exhibits this characteristic similarly to distributionally robust optimizers (DRO)~\cite{levy2020large}.
Similar to how DRO solutions guarantee that all subpopulations -- both majority and minority subpopulations -- in the training data achieve low loss values, the solution for the advantage-weighted RL objective guarantees that the negative log-likelihood of the critical steps with high advantage estimates under $\tilde{\pi}$ (which of per-step DPO is $\bok(\pi_\sft)$) is also low, to a similar extent as the other more prevalent non-critical steps.

In other words, \textbf{our insight} is that weighting steps using advantages in Equation~\ref{eq:advanage_weighted_objective} upweights the likelihood of the underrepresented critical states while down-weighting it for the spurious ones. 
The guarantees on the training data ($\syndata$) also translate to the population level objective when the weights for on-policy samples (advantage estimates) are accurate~\cite{setlur2023bitrate} and the policy is sufficiently regularized~\cite{sagawa2019distributionally}.
Since correct behavior at critical steps determine the correctness of the overall solution, an elevated degree of correctness at executing critical steps at the population level implies a higher test accuracy on the reasoning task.

\vspace{-0.2cm}
\subsubsection{But, Attaining Low Generalization Error Requires Low Advantage Estimation Error}
\label{sec:Exp_with_part2}
\vspace{-0.2cm}
The practical efficacy of algorithms that use negative data for credit assignment requires the advantage estimates to be fairly accurate when estimated using just a few rollouts from $\tilde{\pi}$. For understanding this, let us consider a simple example when we set $\tilde{\pi} = \pi_\sft$.
If the initial advantage of a spurious step is incorrectly over-estimated, advantage-weighted RL via negative data would up-weight the likelihood further. To see why, consider a scenario when most Monte Carlo rollouts from $\tilde{\pi} = \pi_\sft$ contain the spurious step. 
Since the model is still able to generate the correct final answer even after it produces a spurious step on training problems, subsequent rollouts from the model would estimate higher $A_{\pi_\sft}$ for this step, and further advantage-weighted RL training would emphasize this step. This would result in a downward spiral: the model is trained to use the spurious step, which then results in even higher advantage values for this step, which in turns reinforces the same step over and over again. In this scenario, we would not expect advantage-weighted RL to unlearn the spurious step at all or improve generalization, in fact it would only amplify the dependence on this step.
On the other hand, when $\tilde{\pi} = \bok(\pi_\sft)$ for a higher value of $K$, the Monte Carlo advantage estimator has a lower variance. This justifies the choice of $K=5$, which is large enough given our computational budget.

 \begin{wrapfigure}{r}{0.4\textwidth}
 \vspace{-0.25cm}
  \captionsetup{font=footnotesize,skip=0pt}
  \centering
  \includegraphics[width=\linewidth]{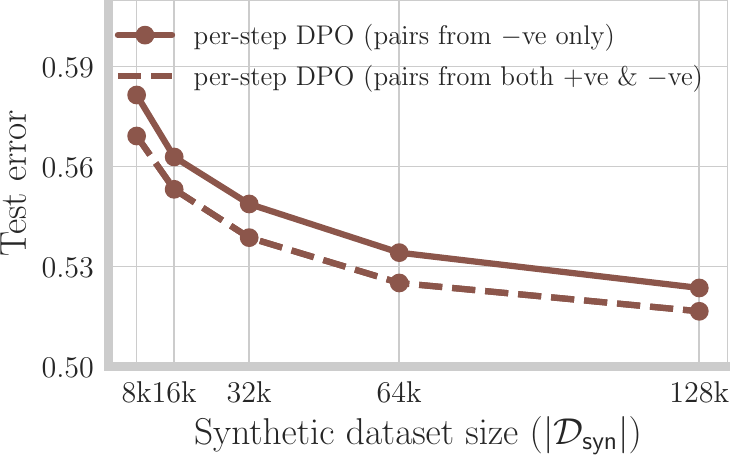}
  \vspace{-0.3cm}
  \caption{\label{fig:chosen_and_reject_dpo} \footnotesize{On MATH, improving advantage estimates by computing advantages over both positive and negative traces sampled from $\pi_\sft$ improves estimation error and final performance for per-step DPO.\vspace{-1.5em}}}
\end{wrapfigure}
\textbf{Improving per-step DPO~\cite{hwang2024self}  with a closer approximation of advantage-weighted RL (Parts 1 + 2 in Algorithm~\ref{alg:per-step-dpo}).} Now, we discuss an experiment that improves the performance of per-step DPO~\cite{hwang2024self} by running the full version of the per-step DPO algorithm (Algorithm~\ref{alg:per-step-dpo}). In particular, we add new preference pairs to the dataset of per-step DPO algorithm starting from positive samples.
Recall from Section~\ref{sec:learning} and Algorithm~\ref{alg:per-step-dpo}, that for a problem $\bx$, with correct answer $\by$ given by SFT or RFT data,  Part 1 of per-step DPO estimates the expected accuracy (Q-value) of each step in a negative rollout $\hatnegby \sim \pi_\sft(\cdot \mid \bx)$ sampled from the SFT policy. For each step $\hatnegby_c$ the Q-value is computed conditioned on $\bx$ and previous steps $\hatnegby_{1:c-1}$.  It then adds the triplet $(\bx, \by, \hatnegby_{1:c})$ to the DPO dataset.
We improve the coverage and accuracy of advantage estimates via Part 2, i.e., repeating this process for steps appearing on a positive trace $\hatposby \sim \pi_\sft(\cdot \mid \bx)$ as well. Specifically, we add  $(\bx, \by, \hatposby_{1:c})$  to the DPO dataset, where the step $\hatposby_{c}$ is the first step in the positive trace to have a low Q-value (as dicated by a relative threshold on the Q-value), which indicates that $\hatposby_{c}$ is likely a spurious step that the SFT model generated. For individual steps that are more likely to occur in either positive or negative traces we improve coverage of alternate possible steps, and for steps that occur in both positive and negative traces, we lower the variance (and error) of the advantage estimate. 
In Figure~\ref{fig:chosen_and_reject_dpo}, we compare the performance of per-step DPO runs with the datasets constructed from only negative vs. both positive and negative traces, and find that the latter has a lower test error for all sizes of $\syndata$.

\vspace{-0.2cm}
\subsubsection{Validating Claims About Generalization: Controlled Analysis on a Didactic Problem} 
\vspace{-0.15cm}
With the above insights and intuitive arguments, we now study the influence of $\pi_\sft$ on the generalization effects of per-step DPO. For our analysis, we consider a didactic star graph problem (Appendix~\ref{sec:star_graph_appendix}) from \citet{bachmann2024pitfalls}, where given a graph in the shape of a star and a query (center/end node), the model is asked to output the full path between the start/end nodes. This task highlights the failure of SFT at planning problems (akin to math reasoning). They show that $\pi_\sft$ minimizes SFT loss by memorizing the ``hard-to-predict'' node adjacent to the center, and copying the rest from the input graph.
It is clear that the failure stems from not being able to identify the critical adjacent token. We will show how credit assignment with negative data accurately upweights the critical token and unlearns the memorized token. 
The advantage value is estimated online, \ie at each time step we sample a batch of data (graphs), rollout steps from the current policy, and estimate the advantage for each step. This allows us to exactly sample from the per-step DPO pair distribution in Theorem~\ref{thm:equivalence}.
As we will see, the initialization for per-step DPO has a significant role to play in the generalization gains we observe. 
To vary the choice of $\pi_\sft$, we choose several intermediate checkpoints obtained during supervised finetuning for synthetic negative data generation. We consider three initializations: \textbf{(1)} an under-trained SFT model with a large training and test loss, and \textbf{(2)} an SFT model obtained by early-stopping based on a held-out validation set, where the validation loss is the lowest, and \textbf{(3)} an over-trained SFT checkpoint, with a low training but high validation loss. 

\begin{figure}[t]
    \captionsetup{font=small,skip=0pt}
    \centering
    \begin{subfigure}[b]{0.34\textwidth}
    \captionsetup{font=small,skip=0pt}
        \centering
        \includegraphics[width=0.99\linewidth]{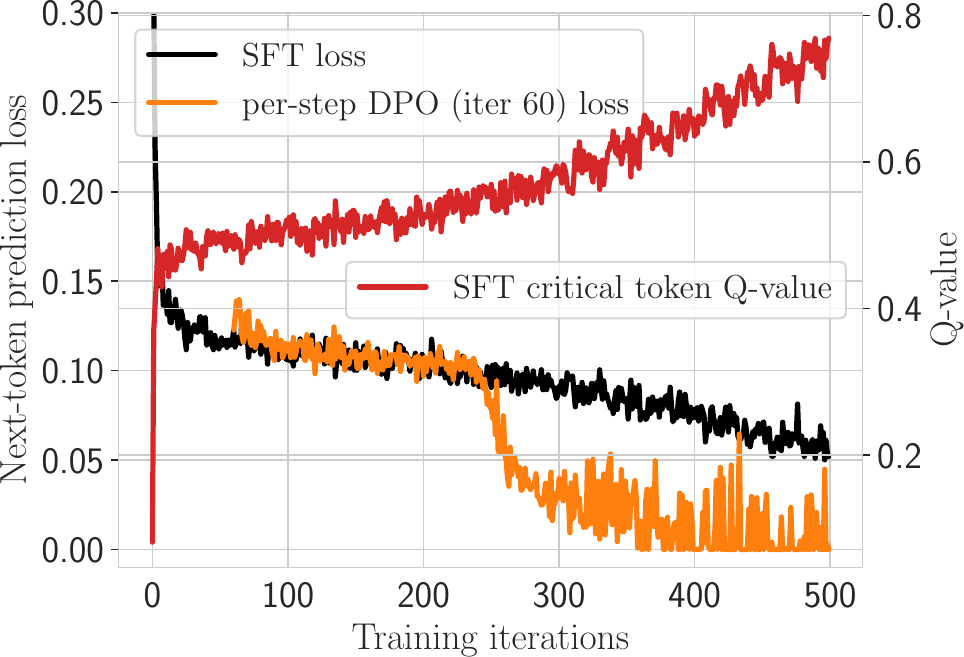}
        \caption*{(a)}
    \end{subfigure}\hfill
    \begin{subfigure}[b]{0.3\textwidth}
    \captionsetup{font=small,skip=0pt}
        \centering
        \includegraphics[width=0.99\linewidth]{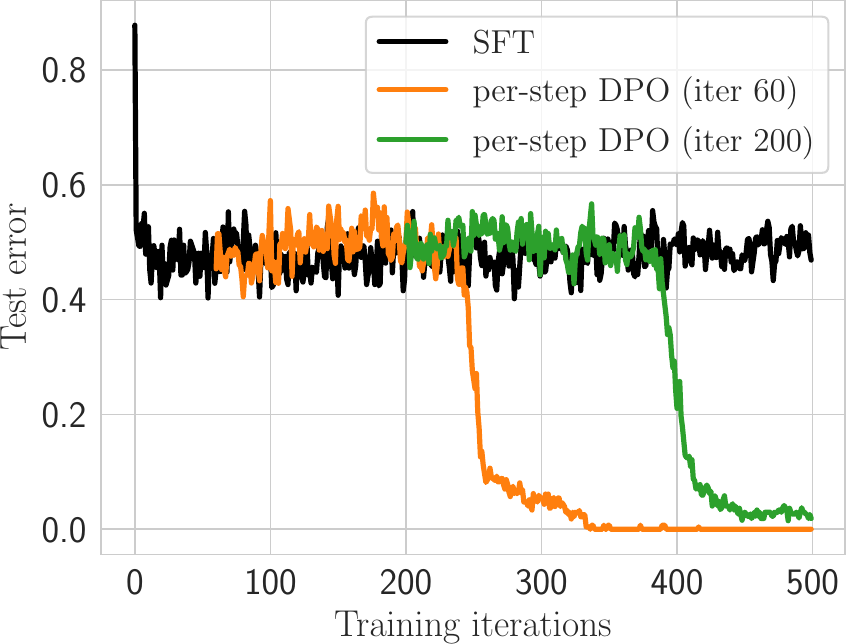}
        \caption*{(b)}
    \end{subfigure}\hfill
    \begin{subfigure}[b]{0.3\textwidth}
        \captionsetup{font=small,skip=0pt}
        \centering
        \includegraphics[width=0.99\linewidth]{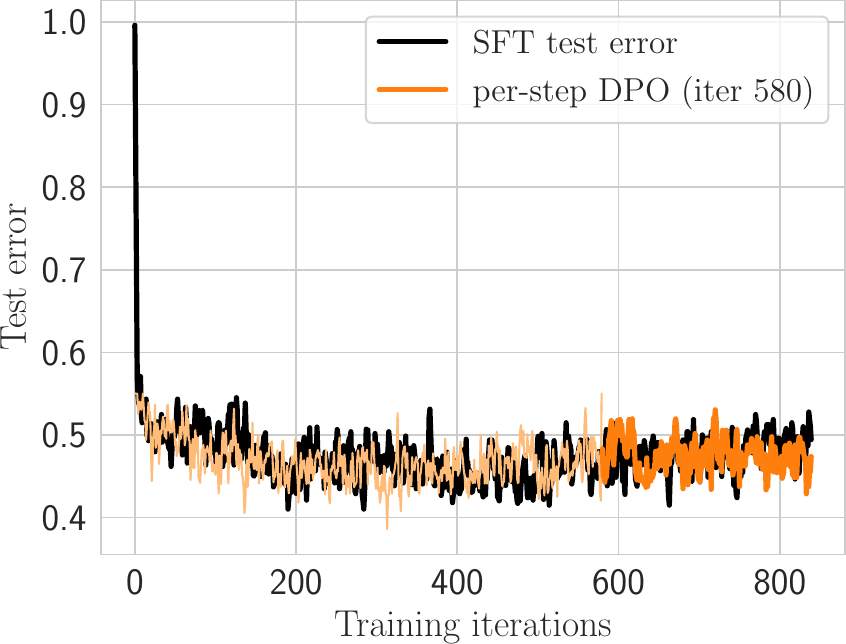}
        \caption*{(c)}
    \end{subfigure}
    \caption{\footnotesize{\textbf{\emph{Didactic analysis on star graph:}} In (a) we plot the SFT loss and Q-value of the critical token (adjacent node) for SFT and per-step DPO (starting from iter 60). Indicative of memorization SFT loss decreases at a slow rate, matching the slow rate of increase in the Q-value. In contrast per-step DPO loss sharply decreases during training. In (b) we notice a corresponding phase transition in the test error of per-step DPO starting from different under-trained SFT checkpoints, which does not happen for an over-trained SFT checkpoint in (c).}} 
    \label{fig:dpo_didactic}
    \vspace{-0.5cm}
\end{figure}
\textbf{(1) \& (2): Training on negative data from an under-trained or early-stopped $\pi_\sft$ improves both training loss and test performance.} As shown in Figure~\ref{fig:dpo_didactic}(a,b), we find that when training with negative data from iteration 60 (under-trained $\pi_\sft$) and iteration 200 (early-stopped $\pi_\sft$), utilizing per-step DPO reduces the training loss very aggressively. These benefits translate to test losses and performance as well (Figure~\ref{fig:dpo_didactic}(b), orange and green). In contrast, supervised fine-tuning exhibits a nearly-flat test loss landscape, although the train loss reduces slowly. Upon a closer inspection, we find that training on positive data via SFT only tends to memorize the critical token in the training data using non-generalizable features, and hence, the resulting model does not generalize to novel problems. This is corroborated by the slowly increasing Q-values of the critical token (Figure~\ref{fig:dpo_didactic}(b) red) as more gradient updates are performed. More training with SFT is unable to ``unlearn'' the non-generalizable feature behind this spurious step and is hence unable to reduce the loss function. On the other hand, per-step DPO with negative data is able to unlearn this spurious step and drives improvement, as evident by the drastic improvement on train and test. This experiment corroborates the connection between advantage-weighted RL and DRO from Section~\ref{sec:dro}. For the under-trained SFT model, the Q-value of the critical token (first node of the correct path) is not as high as the rest of the ``easy-to-predict'' tokens corresponding to nodes on the path that can be copied from the graph given the first node. When we optimize the advantage-weighted RL objective, this disparity in Q-values is reduced by re-weighting the next-token loss with advantage estimates, and training is able to preferentially minimize the loss at the critical token. In this scenario, this results in the model picking upon the ground-truth generalizable feature.

\textbf{(3) Training on negative data from an over-trained SFT initialization leads to model collapse.} When training with negative data on an over-trained $\pi_\sft$ (iteration 580) in Figure~\ref{fig:dpo_didactic}(c), we observe that both SFT and per-step DPO exhibit identical test errors since advantage estimates for the critical token are already high and similar to other tokens. Hence, per-step DPO would not re-weight this token any more than other tokens. Thus, training with per-step DPO would be similar to training with the standard SFT objective.
In particular, Monte Carlo samples from the over-trained checkpoint estimate a high advantage since Q-value is already high at iteration 500 (in Figure~\ref{fig:dpo_didactic}(a)), when the SFT policy has sufficiently memorized the training data using noisy features, training further is unable to unlearn this dependence. Hence, in this regime, negative data leads to no improvement over finetuning on only positive data.

\begin{AIbox}{Takeaways for generalization and spurious correlations with negative data}
Advantage-weighted RL can unlearn spurious steps and, hence, improves generalization when: (i) advantage estimation error is low; and (ii) the base model initialization is under-trained that imperfectly cloned spurious steps have low advantage, such that they can be identified with negative data. 
\end{AIbox}

\vspace{-0.2cm}
\section{Discussion and Conclusion}
\vspace{-0.2cm}

Our work studies the role of synthetic data for improving math reasoning capabilities of LLMs. We find that while the typical approach of collecting new questions and corresponding positive (correct) solutions from capable models like GPT-4/Gemini-1.5 presents underwhelming data scaling. The sample efficiency of the same data can be improved up to $2 \times$ by sampling more positive traces from the 7B sized models SFT-ed on the original data. However, training on positive self-generated synthetic data alone often amplifies the model's dependence on spurious steps, that erroneously appear to lead to a good solution but do not generalize to novel problems and hurt test performance.
That said, surprisingly, we show that negative (incorrect) traces sampled from the same SFT model can be used to address the failure modes of training on only positive data. In particular, negative data can be used to estimate advantage values for every step, and using these advantage estimates via RL enables us to address this problem. We show how the advantages can be used implicitly by preference optimization objectives. We show how training on an instance of this objective leads to $8 \times$ improvements in sample efficiency of the synthetic data used.

\textbf{Limitations.} While our work provides some results and conceptual models to understand the role of synthetic data for reasoning, there are still many open questions that need to be answered to fully understand its utility. While synthetic data from LLMs like Gemini and GPT-4 holds great potential, for more complex reasoning problems (more complicated than the datasets evaluated in our work),
synthetic data generated from more capable models can contain errors. Generating negative/positive data by referencing synthetic data answers can reinforce unwanted spurious correlations highlighted in our work. This means that novel recipes for generating synthetic problems may be utilized in the future, and our analysis might need to be re-done. That said, we believe that our insights about algorithmic behavior with synthetic data are still quite general and should transfer to these novel settings as well. Ultimately, we would want that training on synthetic data improves transfer and generalization abilities of the model in general reasoning scenarios, and to this end, an evaluation of transfer capabilities is an important avenue that future work should focus on.

\vspace{-0.2cm}
\section*{Acknowledgements}
\vspace{-0.15cm}

This work was done at CMU. We thank Vaishnavh Nagarajan, Yi Su, Aleksandra Faust, Christina Baek, Charlie Snell, Seohong Park,  Gaurav Ghosal, Aditi Raghunathan, Katie Kang, Don Dennis, Dhruv Malik, and Pratiksha Thaker for informative discussions and feedback on an earlier version of this paper. This work was supported by compute donations from Google Cloud (TRC) and MultiOn. AS thanks OpenAI and Google respectively for providing GPT4-Turbo and Gemini-1.5 Pro credits for academic use. AK and YG thank Tianhe Yu for feedback on the paper.

\bibliography{main}
\newpage
\appendix
\part*{Appendices}

\section{Proof of Theorem~\ref{thm:equivalence}}
\label{sec:proof-ofequiv}
\vspace{-0.2cm}

We first restate the theorem statement and then provide a proof for this below. Our main goal in this theorem is to show that training with per-step DPO is equivalent to running advantage-weighted RL shown in the theoretical result.

\begin{theorem}[Equivalence of advantage-weighted RL and DPO with per-step pairs]
\label{thm:equivalence_restated}
    The optimal policy from Equation~\ref{eq:dpo_objective} with $\dpodata$ given by $(\bx, [\by_{1:i}, \posby_{i+1}], [\by_{1:i}, \negby_{i+1}])$ where the positive and negative traces share prefix $\by_{1:i} \sim \pi_\sft$, and $\negby_{i+1}\sim \pi_\sft(\cdot | \bx, \by_{1:i})$, $\posby_{i+1} \sim \sigma(A_{\tilde{\pi}}(\bx, \by_{1:i}; \cdot)- A_{\tilde{\pi}}(\bx, \by_{1:i}; \negby_{i+1}))$,  
    is identical to the optima of the advantage-weighted RL objective:
    {
    \begin{align}
        \max_{\pi}~~& \mathbb{E}_{\bx \sim p_\mathrm{syn}(\bx), \by  \sim \pi_\sft(\cdot|\bx)} \left[ \sum_{i=1}^L \log \pi(\by_{i} \big| \bx, \by_{0:i-1}) \cdot \exp \left( A_{\tilde{\pi}}(\bx, \by_{0:i-1}, \by_{i}) / \beta \right) \right]. \vspace{-0.5cm}
        \label{eq:advanage_weighted_objective_appendix}
    \end{align}
    }
\end{theorem}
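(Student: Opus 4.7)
Proof proposal. The plan is to reduce the per-step DPO objective to the AWR objective by matching their optimal policies. The backbone is the Rafailov et al.\ DPO-to-KL-RL reduction applied one step at a time, exploiting that the preference pair distribution in the hypothesis is precisely a Bradley--Terry model under the reward $A_{\tilde\pi}$.

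First I would simplify the DPO loss (Equation~\ref{eq:dpo_objective}) on the specified pair distribution. Because both $\posby_{i+1}$ and $\negby_{i+1}$ share the prefix $\by_{1:i}$, the log-ratios $\log\pi(\by_{1:i}\mid\bx)-\log\pi_\sft(\by_{1:i}\mid\bx)$ cancel inside the sigmoid, so the loss depends only on the per-step ratios $\log\pi(\by_{i+1}\mid\bx,\by_{1:i})/\pi_\sft(\by_{i+1}\mid\bx,\by_{1:i})$. Next I would observe that the construction of $\posby_{i+1}$ as a sample from $\sigma\bigl(A_{\tilde\pi}(\bx,\by_{1:i};\cdot)-A_{\tilde\pi}(\bx,\by_{1:i};\negby_{i+1})\bigr)$ is exactly the Bradley--Terry preference model generated by treating $A_{\tilde\pi}$ as a token-level reward. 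Thus the per-step DPO loss is the DPO loss for the preference distribution induced by the reward $A_{\tilde\pi}$ at the state $(\bx,\by_{1:i})$, with reference policy $\pi_\sft$, and the marginal over $(\bx,\by_{1:i})$ being $p_\mathrm{syn}\cdot\pi_\sft$.

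I would then invoke the standard DPO identity: for a fixed context, the minimizer of a DPO loss with reward $r$ and reference $\pi_\sft$ satisfies $\beta\log(\pi^\ast/\pi_\sft) = r + C$ for a context-only constant $C$. Applied here, this yields
\begin{equation*}
\pi^\ast(\by_{i+1}\mid\bx,\by_{1:i}) \;\propto\; \pi_\sft(\by_{i+1}\mid\bx,\by_{1:i})\,\exp\bigl(A_{\tilde\pi}(\bx,\by_{1:i};\by_{i+1})/\beta\bigr),
\end{equation*}
with the normalizing constant absorbing $C$. In parallel, I would write down the first-order optimality conditions for the AWR objective in Equation~\ref{eq:advanage_weighted_objective_appendix}: for each state $(\bx,\by_{0:i-1})$ visited by $\pi_\sft$, maximizing $\sum_{\by_i}\pi_\sft(\by_i\mid\cdot)\exp(A_{\tilde\pi}/\beta)\log\pi(\by_i\mid\cdot)$ subject to $\sum_{\by_i}\pi(\by_i\mid\cdot)=1$ gives the same closed form $\pi^\ast\propto\pi_\sft\exp(A_{\tilde\pi}/\beta)$ by a Lagrange-multiplier argument. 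Matching the two closed forms establishes the equivalence of optima, at every state lying in the support of $p_\mathrm{syn}\cdot\pi_\sft$.

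The main obstacle I foresee is not in the per-step algebra but in handling the extension across steps: the DPO hypothesis only constrains the conditional $\pi^\ast(\by_{i+1}\mid\bx,\by_{1:i})$ at prefixes drawn from $\pi_\sft$, while the AWR objective likewise only identifies the policy on $\pi_\sft$-reachable prefixes. I would therefore state the equivalence as holding on this shared support and remark that outside of it neither objective pins down the policy, which is the natural scope of such off-policy reductions. A secondary subtlety is that the DPO identity gives $\beta\log(\pi^\ast/\pi_\sft) = A_{\tilde\pi} + C$ only up to a prefix-dependent additive constant; I would make sure to verify that the same ambiguity appears in the AWR optimum (through the normalizer $Z(\bx,\by_{1:i})$), so that the two objectives yield the identical normalized conditional distributions and hence the same policy.
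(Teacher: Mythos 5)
Your proposal is correct and follows essentially the same route as the paper's proof: both invoke the standard DPO-to-KL-regularized-RL reduction under the Bradley--Terry model induced by treating $A_{\tilde{\pi}}$ as the per-step reward, obtain the closed form $\pi^*(\by_{i+1}\mid\bx,\by_{1:i})\propto\pi_\sft(\by_{i+1}\mid\bx,\by_{1:i})\exp(A_{\tilde{\pi}}/\beta)$, and match it against the optimum of the advantage-weighted objective. Your additional care about the shared-prefix cancellation, the Lagrangian derivation of the AWR optimum, and the restriction to the $\pi_\sft$-reachable support makes the argument slightly more explicit than the paper's, but it is the same proof.
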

\begin{proof}
    To prove this statement, we make the following observation: DPO~\citep{rafailov2023direct} is equivalent to optimizing a KL-divergence penalized expected reward objective in an induced Bradly-Terry model of preferences defined by the reward function. That is, for any reward function $r(\bx, \by)$ over contexts $\bx \sim \mu$ and responses $\by$, the optimal solution to the following RL objective:
    \begin{align}
        \max_{\pi}~~ \mathbb{E}_{\bx \sim \mu, \by \sim \pi(\cdot|\bx)} \left[r(\bx, \by) \right] - \beta D_\mathrm{KL}(\pi(\cdot|\bx) || \pi_\sft(\cdot|\bx)),
    \end{align}
    is given by the following advantage-weighted optimal policy, $\pi^*(\cdot|\cdot)$:
    \begin{align}
        \label{eq:advantage_weighted_policy}
        \forall \bx, \by, ~~ \pi^*(\by|\bx) \propto \pi_\sft(\by|\bx) \cdot \exp \left(\frac{r(\bx, \by)}{\beta} \right),
    \end{align}
      and one can learn this optimal policy by running DPO on preference tuples $(\bx, \by_1, \by_2)$ sampled by the Bradly-Terry model~\citep{bradley1952rank} induced by the reward function $r$:
    \begin{align}
        \label{eq:bradly_terry}
        p \left( \by_1 \succcurlyeq \by_2 | \bx \right) = \frac{\exp(r(\bx, \by_1))}{\exp(r(\bx, \by_1)) + \exp(r(\bx, \by_2))}.
    \end{align}
    Given this background information, we know that the optimal advantage-weighted RL policy optimizing Equation~\ref{eq:advanage_weighted_objective_appendix} is given by: 
    \begin{align}
    \label{eq:optimal_adv_policy}
        \forall \bx, \by_{0:i}, ~~ \pi(\by_{i} | \bx, \by_{0:i-1}) \propto \pi_\sft(\by_i | \bx, \by_{0:i-1}) \cdot \exp \left( \frac{A_{\tilde{\pi}}(\bx, \by_{0:i-1}, \by_{i})}{\beta} \right).
    \end{align}
    Combining Equation~\ref{eq:optimal_adv_policy} with the equivalence between Equation~\ref{eq:advantage_weighted_policy} and the Bradly-Terry model (Equation~\ref{eq:bradly_terry}), we obtain that, if preference pairs $(\bx, [\by_{1:i}, \posby_{i+1}], [\by_{1:i}, \negby_{i+1}])$ were sampled from the SFT policy: $\posby_{i+1} \sim \pi_\sft(\cdot|\bx, \by_{0:i})$ and $\negby_{i+1} \sim \pi_\sft(\cdot|\bx, \by_{0:i})$, and labeled according to Equation~\ref{eq:bradly_terry} applied on advantage estimates, then we obtain the desired equivalence result. 
\end{proof}

\vspace{-0.35cm}
\section{Theory: Why Does Negative Data Improve Generalization?}
\label{sec:formal_statement}
\vspace{-0.2cm}

We saw in Section~\ref{sec:generalization} that collecting negative data from an appropriate SFT policy $\pi_\sft$ and an appropriate $K$, and training on this data improves generalization performance of the resulting model. In this section, building on the equivalence to advantage-weighted RL (Theorem~\ref{thm:equivalence}), we attempt to formalize this observation into a performance guarantee. In particular, we show below that training on negative data implies that we are able to improve over the SFT policy, especially via the detection of critical steps, that attain high advantages, $A_{\tilde{\pi}}(\bx, \by_{0:i-1}, \by_{i})$, that are otherwise not prioritized by training on positive data alone. Our theoretical result extends guarantees from the RL literature~\citep{kumar2022should} comparing RL with imitation learning to show that indeed the use of RL (and hence negative data) improves over imitation alone.

\textbf{Notation and setup.} Define the policy obtained after advantage-weighted RL training as $\pi_\text{neg}$. Concretely, $\pi_\negtext(\by|\bx)$ is given as:
\begin{align}
\label{eq:empirical_advantage}
    \!\!\!\!\!\!\!\!\!\!\forall \bx, \by_{0:j+1}, ~~ \pi_\negtext(\by_{j+1}|\bx, \by_{0:j}) = \frac{1}{\widehat{\mathbb{Z}}(\bx, \by_{0:j})} \pi_\sft(\by_{j+1}|\bx, \by_{0:j}) \cdot \exp \left(\frac{\hat{A}_{\tilde{\pi}}(\bx, \by_{0:j}, \by_{j+1})}{\beta} \right),
\end{align}
where the normalization factoris given by $\mathbb{Z}(\bx, \by_{0:j})$ for each of the per-step policy distributions. This normalization factor is a critical factor that will drive the core of the theoretical result. We also note that the normalization factor in Equation~\ref{eq:empirical_advantage} is derived from \emph{empirical} advantage estimates and not from the expected estimates for the advantage value.
Following \citet{agarwal2019reinforcement,kumar2022should}, we operate in a tabular setting with a discrete (but combinatorially-large and variable-length) action space of responses, and our proof follows Theorem 4.4 in \citet{kumar2022should}.

\begin{theorem}[Utility of negative data over positive data.]
\label{thm:neg_data_theorem}
    Let $\pi_\text{neg}$ denote the policy obtained after advantage-weighted RL (Equation~\ref{eq:empirical_advantage}) under an empirical distribution $\hat{\mu}$ over prompts $\bx$. Then the difference between the expected reward (i.e., task success rate), $J(\cdot)$, attained by $\pi_\text{neg}$ and $\pi_\sft$ is lower-bounded as:
\begin{align*}
    J(\pi_\text{neg}) - J(\pi_\sft) 
    \gtrsim & ~~~ \beta \cdot \mathbb{E}_{\bx_i \sim \hat{\mu}, \by_{i, 0:L} \sim \pi_\text{neg}(\cdot|\bx_{i})} \left[ \sum_{j=1}^L \log \mathbb{Z}(\bx_i, \by_{i, 0:j}) \right] \\ 
     & ~~~~~~~~~~~~~ - \text{(overestimation in } \hat{A}_{\tilde{\pi}}(\bx, \by_{0:i-1}, \by_{i})) + \frac{c_0}{\sqrt{|\syndata|}},  
\end{align*}
where $\mathbb{Z}(\clubsuit, \circ)$ denotes the sum over exponentiated differences of the advantage and log likelihood values under $\pi_\sft$ for all possible candidate steps given a problem $\clubsuit$ and a partial solution $\circ$. That is, 
\begin{align*}
    \mathbb{Z}(\clubsuit, \circ) := \sum_{\spadesuit \in ~\text{step candidates}} \exp \left( \frac{{A}_{\tilde{\pi}}(\clubsuit, \circ; \spadesuit)}{\beta} + \log \pi_\sft(\spadesuit | \clubsuit, \circ) \right),
\end{align*}
$c_0$ is a constant depending upon the Rademacher complexity of the space of policies $\pi_\text{neg}$ close to the SFT policy under the KL-divergence, $|\syndata|$ denotes the size of synthetic training prompts. 
\end{theorem}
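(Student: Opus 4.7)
The plan is to follow the analytic template of Theorem 4.4 in \citet{kumar2022should}, adapting it to our per-step reasoning setting by exploiting the advantage-weighted closed-form of $\pi_\text{neg}$ from Equation~\ref{eq:empirical_advantage}. I would begin with the standard performance difference lemma, which expresses
\begin{align*}
J(\pi_\text{neg}) - J(\pi_\sft) = \mathbb{E}_{\bx \sim \hat{\mu},~\by_{0:L} \sim \pi_\text{neg}(\cdot|\bx)}\!\left[\sum_{j=0}^{L-1} A_{\pi_\sft}(\bx, \by_{0:j}; \by_{j+1})\right] + (\text{generalization gap}),
\end{align*}
so the task reduces to lower-bounding the sum of on-policy advantages of $\pi_\sft$ under rollouts from $\pi_\text{neg}$, plus a uniform-convergence term for passing from $\hat{\mu}$ to the true prompt distribution.

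Next, I would use the defining formula for $\pi_\text{neg}$ to rewrite the advantage in a useful form. Taking logs on both sides of Equation~\ref{eq:empirical_advantage} and rearranging gives, for every $(\bx, \by_{0:j+1})$,
\begin{align*}
\tfrac{1}{\beta}\, \hat{A}_{\tilde{\pi}}(\bx, \by_{0:j}, \by_{j+1}) = \log \pi_\text{neg}(\by_{j+1}\mid\bx,\by_{0:j}) - \log \pi_\sft(\by_{j+1}\mid\bx,\by_{0:j}) + \log \widehat{\mathbb{Z}}(\bx, \by_{0:j}).
\end{align*}
Summing over $j$ and taking expectation under $\pi_\text{neg}$, the $\log(\pi_\text{neg}/\pi_\sft)$ term is a non-negative (per-step) KL divergence and can be dropped from the lower bound, leaving $\beta\,\mathbb{E}_{\pi_\text{neg}}\!\sum_j \log \widehat{\mathbb{Z}}$. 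Replacing the empirical log-partition by the population one $\log \mathbb{Z}$ introduces a controllable slack that I would absorb into the overestimation-in-advantage term defined in the theorem statement. To then connect $\hat{A}_{\tilde{\pi}}$ used in the update with the true $A_{\tilde{\pi}}$ appearing inside $\mathbb{Z}$, I would split $\hat{A} = A_{\tilde{\pi}} + (\hat{A} - A_{\tilde{\pi}})$; the second summand, accumulated along the trajectory, is precisely the "overestimation" subtracted quantity.

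For the final $c_0/\sqrt{|\syndata|}$ term, I would appeal to a Rademacher complexity argument restricted to the class of policies that remain in a bounded-KL ball around $\pi_\sft$, which is exactly the constraint induced by the regularizer implicit in advantage-weighted RL (and preserved by DPO via Theorem~\ref{thm:equivalence}). Standard contraction and McDiarmid-type concentration over trajectory-length $L$ sums of bounded rewards then yield a $\tilde{O}(1/\sqrt{|\syndata|})$ uniform-convergence bound for $J$ over this restricted class, which controls the gap between empirical and population performance for both $\pi_\text{neg}$ and $\pi_\sft$.

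The main obstacle I anticipate is handling the advantage-estimation error in a one-sided fashion so that it appears only as a subtracted overestimation term, rather than an absolute error. A naive Hoeffding bound would produce a two-sided error that could erase the $\beta \log \mathbb{Z}$ gain. The fix is to observe that advantage-weighted RL is \emph{monotone} in $\hat{A}$: overestimation biases the policy toward potentially suboptimal steps (hurting $J(\pi_\text{neg})$), while underestimation only fails to amplify good steps (leaving the SFT behavior unchanged on those steps and hence not worse than $\pi_\sft$). Making this monotonicity quantitative — ideally via a step-wise coupling between $\pi_\text{neg}$ under $\hat{A}$ and under $A_{\tilde{\pi}}$ — is the delicate piece of the argument, and it is also where the choice of $\tilde{\pi} = \mathrm{BoK}(\pi_\sft)$ with large $K$ enters quantitatively, since it shrinks the variance of the Monte Carlo estimator of $\hat{A}$ and thereby the magnitude of the subtracted overestimation term.
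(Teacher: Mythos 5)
Your proposal follows essentially the same route as the paper's proof: the performance difference lemma, substitution of the closed-form advantage-weighted policy to rewrite the per-step advantage as a nonnegative KL term plus $\log \widehat{\mathbb{Z}}$, dropping the KL, a Rademacher-complexity bound over the KL-constrained policy class for the empirical-to-population gap, and absorbing the $\hat{A}_{\tilde{\pi}}$ versus $A_{\tilde{\pi}}$ discrepancy into the overestimation term. Your added discussion of making the estimation error one-sided is a reasonable refinement, but the paper itself treats that term at the same informal level, so the two arguments are substantively identical.
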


\begin{proof}
To begin the proof, we recall that we are operating in a discrete action space of steps $\by_i$, although this space is exponentially large. Since we operate in discrete action spaces, we invoke Lemma 5 from \citet{agarwal2019reinforcement} for analyzing softmax policy gradient methods (this Lemma was extended by Lemma B.11 from \citet{kumar2022should} for comparing BC vs offline RL). Denote by $\widehat{J}(\pi)$, the reward attained by policy $\pi$ in expectation over the empirical distribution $\widehat{\mu}$:
\begin{align}
    \widehat{J}(\pi_\text{neg}) - \widehat{J}(\pi_\sft) := \mathbb{E}_{\bx \sim \widehat{\mu}}\left[\widehat{V}^{\pi_\text{neg}}(\bx)\right] -  \mathbb{E}_{\bx \sim \widehat{\mu}}\left[\widehat{V}^{\pi_\sft}(\bx)\right] \geq \beta \mathbb{E}_{\bx \sim \widehat{\mu}} \left[ \log \widehat{\mathbb{Z}}(\bx) \right].
\end{align}
We utilize the performance difference lemma~\citep{kakade2002approximately} on the MDP induced by the set of initial problems in the training distribution $\widehat{\mu}$, and the model induced deterministic dynamics distribution:
\begin{align*}
\widehat{J}(\pi_\text{neg}) &- \widehat{J}(\pi_\sft) = \sum_{j=1}^L \mathbb{E}_{\bx \sim \widehat{\mu}, \by_{0:j-1} \sim \pi_\text{neg}(\cdot|\bx)}\left[\sum_{\by_j} \pi_\text{neg}(\by_j|\bx, \by_{0:j-1}) \hat{A}_{\tilde{\pi}}(\bx, \by_{0:i-1}, \by_{i}) \right]\\
&= \sum_{j=1}^L \mathbb{E}_{\bx \sim \widehat{\mu}, \by_{0:j-1} \sim \pi_\text{neg}(\cdot|\bx)}\left[\sum_{\by_j} \pi_\text{neg}(\by_j|\bx, \by_{0:j-1}) \log \frac{\pi_\text{neg}(\by_j|\bx, \by_{0:j-1}) \cdot \widehat{\mathbb{Z}}(\bx, \by_{0:j})}{\pi_\sft(\by_j|\bx, \by_{0:j-1})} \right] \\
&= \beta \cdot \sum_{j=1}^L \mathbb{E}_{\bx \sim \widehat{\mu}, \by_{0:j-1} \sim \pi_\text{neg}(\cdot|\bx)}\left[ D_\mathrm{KL} \left( \pi_\text{neg}(\cdot|\bx, \by_{0:j-1}), \pi_\sft(\cdot|\bx, \by_{0:j-1}) \right) +  \log \widehat{\mathbb{Z}}(\bx, \by_{0:j}) \right] \\
& \geq \beta \cdot \sum_{j=1}^L \mathbb{E}_{\bx \sim \widehat{\mu}, \by_{0:j-1} \sim \pi_\text{neg}(\cdot|\bx)} \left[ \log \widehat{\mathbb{Z}}(\bx, \by_{0:j}) \right].\\
& = \beta \cdot \mathbb{E}_{\bx \sim \hat{\mu}, \by_{i, 0:L} \sim \pi_\text{neg}(\cdot|\bx)} \left[ \sum_{j=1}^L \log \mathbb{Z}(\bx, \by_{0:j}) \right].
\end{align*}
Now, we can prove the desired result by accounting for the gap in the success rate between the actual distribution over $\bx \sim \mu$ and the empirical distribution induced by problems in the dataset $\widehat{\mu}$:
\begin{align*}
    J(\pi_\text{neg}) - J(\pi_\sft) &\geq \underbrace{J(\pi_\text{neg}) - \widehat{J}(\pi_\text{neg})}_{\text{(a)}} + \underbrace{\widehat{J}(\pi_\text{neg}) - \widehat{J}(\pi_\sft)}_{\text{(b)}} - \underbrace{J(\pi_\sft) - \widehat{J}(\pi_\sft)}_{\text{(c)}}\\
    &\geq \beta \cdot \mathbb{E}_{\bx \sim \hat{\mu}, \by_{i, 0:L} \sim \pi_\text{neg}(\cdot|\bx)} \left[ \sum_{j=1}^L \log \widehat{\mathbb{Z}}(\bx, \by_{0:j}) \right] - \frac{c_0}{\sqrt{|\syndata|}}\\
    & \geq \beta \cdot \mathbb{E}_{\bx \sim \hat{\mu}, \by_{i, 0:L} \sim \pi_\text{neg}(\cdot|\bx)} \left[ \sum_{j=1}^L \log {\mathbb{Z}}(\bx, \by_{0:j}) \right] - \frac{c_0}{\sqrt{|\syndata|}} + \Delta,
\end{align*}
where $c_0$ is a constant that depends on the Rademacher complexity of the function class of policies $\pi_\text{neg}$ (coming from a uniform bound that we invokve to bound term \textit{(a)}, since $\pi_\negtext$ depends on the dataset samples), and this term arises since the empirical distribution over prompts is not the same as the true population. This concentration term decays as the size of the synthetic data (number of problems) are scaled up. The term $\Delta$ denotes the overestimation error between the estimated advantages $\widehat{A}_{\tilde{\pi}}(\bx, \by_{0:i-1}, \by_i)$ and the true advantages ${A}_{\tilde{\pi}}(\bx, \by_{0:i-1}, \by_i)$, in expectation under the distribution of the learned policy. The estimation error $\Delta$ depends on $\pi_\sft$ and the value of $K$ used if the rollout policy $\tilde{\pi}$ corresponds to the BoK($\pi_\sft$) policy. This proves the theorem.
\end{proof}

\textbf{Interpretation \& perspectives.} Also note that the improvement in performance between $\pi_\negtext$ and $\pi_\sft$ depends on the advantage estimate: if the advantage estimates are high, then this term is large, meaning that the more the fraction of high-advantage critical states, the higher the improvement. In addition, the bound also says that if the over-estimation $\Delta$ in the advantage estimate is large, the performance improvement is small. This is perhaps expected: consider the scenario when the BoK($\pi_\sft$) policy is used to collect data, for a large $K$. In this scenario, the divergence between the empirical advantage estimate $\widehat{A}_{\tilde{\pi}}$ and the expected estimate $A_{\tilde{\pi}}$ is likely large. In the worst case, the estimate $\widehat{A}_{\tilde{\pi}}$ can arbitrarily overestimate $A_{\tilde{\pi}}$, as it would take on a high value even if there is just \emph{one} sequence among the $K$ rollouts that successfully solves the problem. For example, a spurious step may be labeled incorrectly as critical in this case and training on negative data may not improve (consistent with running per-step DPO on an over-trained SFT checkpoint in Figure~\ref{fig:dpo_didactic}). On the other hand, if advantages are more accurate, training on negative data should improve performance.

\vspace{-0.3cm}
\section{Additional Related Work}
\label{sec:additional-relwork}
\vspace{-0.3cm}

\textbf{Failure modes for supervised finetuning (SFT).}
First, since SFT induces an open-loop~\citep{wu2023pairwise} next-token prediction loss, prediction errors on even a single token can snowball during inference, leading to poor performance on the prompts appearing in the dataitself~\cite{kaariainen2006lower,ross2010efficient}. Second, even when an LLM has perfectly cloned the SFT data, it is prone to memorize ``hard to learn'' tokens~\cite{tirumala2022memorization}, especially in planning and lookahead tasks~\cite{mccoy2023embers,momennejad2024evaluating}, which is critical for math reasoning.
This leads to poor generalization~\cite{bachmann2024pitfalls,dziri2024faith} and hallucination on new novel, test-tim prompts~\cite{kang2024unfamiliar}. In this work, we study how synthetic data methods can address these failures via: (i) maximizing likelihood on positive data generated from both the SFT policy and a stronger teacher that enjoys improved coverage over new states, 
and (ii) preference optimization using the negative data generated from the SFT policy.

\section{Synthetic Data Generation}
\label{sec:synth_data_appendix}
\vspace{-0.3cm}

\begin{tcolorbox}[colback=blue!5!white, colframe=blue!75!black, title= Prompt used for GSM8K/MATH synthetic data~\citep{li2024common}]
Please act as a professional math teacher.
Your goal is to create high quality math problems to help students learn math.
You will be given a math question. Please create a new question based on the Given Question and following
instructions.

To achieve the goal, you have one job.

\# Please generate a similar but new question according to the Given Question.

You have four principles to do this.
\# Ensure the new question only asks for one thing, be reasonable, be based on the Given Question, and can be
answered with only a number(float or integer). For example, DO NOT ask, ‘what is the amount of A, B and
C?’.

\# Ensure the new question is in line with common sense of life. For example, the amount someone has or pays
must be a positive number, and the number of people must be an integer.

\# Ensure your student can answer the new question without the given question. If you want to use some
numbers, conditions or background in the given question, please restate them to ensure no information is
omitted in your new question.

\# You only need to create the new question. Please DO NOT solve it.

Given Question: <insert question from original dataset here>

Your output should be in the following format:

CREATED QUESTION: <your created question>

\end{tcolorbox}

For GSM8K, we replace the phrase ``Your goal is to create high quality math problems to help students learn math.'' with ``Your goal is to create high quality math \emph{word} problems to help students learn math.'', as we found this to produce problems that were closer to GSM-style problems.

To generate the synthetic data, we used OpenAI credits worth approximately 3000 US dollars.

\section{Negative Data Identifies Spurious Steps with Advantage Estimates}
\label{app:spurious-examples}
\vspace{-0.3cm}

In this section, we present several examples from the GSM8K and MATH synthetic datasets  of model-generated (from $\pi_\sft$) positive data with spurious steps.  Then,  we show how computing advantage values accurately identify the spurious steps.  We highlight the steps that were deemed to be low or high advantage based on the estimated advantage value. The darker the intensity of the color, the lower the advantage of that step. With this qualitative probe, we see that the steps that are darkest are indeed either incorrect or correct but involve a reasoning step that is irrelevant for solving the problem at hand.

\begin{tcolorbox}[colback=blue!5!white, colframe=blue!75!black, title=Spurious step identified by negative data]
\textbf{Question:} 
Ken created a care package to send to his brother, who was away at boarding school.  Ken placed a box on a scale, and then he poured into the box enough jelly beans to bring the weight to 2 pounds.  Then, he added enough brownies to cause the weight to triple.  Next, he added another 2 pounds of jelly beans.  And finally, he added enough gummy worms to double the weight once again.  What was the final weight of the box of goodies, in pounds?

\vspace{2em}
\textbf{Positive response with identified spurious step: }

{\setlength{\fboxsep}{0pt}\colorbox{blue!5!white}{\parbox{\textwidth}{
\colorbox{red!32.81}{\strut When} \colorbox{red!32.81}{\strut the} \colorbox{red!32.81}{\strut weight} \colorbox{red!32.81}{\strut was} \colorbox{red!32.81}{\strut originally} \colorbox{red!32.81}{\strut on} \colorbox{red!32.81}{\strut the} \colorbox{red!32.81}{\strut scale,} \colorbox{red!32.81}{\strut the} \colorbox{red!32.81}{\strut box} \colorbox{red!32.81}{\strut weighed} \colorbox{red!32.81}{\strut 0} \colorbox{red!32.81}{\strut pounds.} \colorbox{red!7.36}{\strut After} \colorbox{red!7.36}{\strut Ken} \colorbox{red!7.36}{\strut added} \colorbox{red!7.36}{\strut the} \colorbox{red!7.36}{\strut jelly} \colorbox{red!7.36}{\strut beans} \colorbox{red!7.36}{\strut to} \colorbox{red!7.36}{\strut the} \colorbox{red!7.36}{\strut box,} \colorbox{red!7.36}{\strut its} \colorbox{red!7.36}{\strut weight} \colorbox{red!7.36}{\strut was} \colorbox{red!7.36}{\strut 2} \colorbox{red!7.36}{\strut pounds} \colorbox{red!7.36}{\strut -} \colorbox{red!7.36}{\strut 0} \colorbox{red!7.36}{\strut pounds} \colorbox{red!7.36}{\strut =} \colorbox{red!7.36}{\strut <<2-0=2>>2} \colorbox{red!7.36}{\strut pounds.} \colorbox{red!21.04}{\strut Then} \colorbox{red!21.04}{\strut Ken} \colorbox{red!21.04}{\strut added} \colorbox{red!21.04}{\strut the} \colorbox{red!21.04}{\strut brownies} \colorbox{red!21.04}{\strut to} \colorbox{red!21.04}{\strut the} \colorbox{red!21.04}{\strut box,} \colorbox{red!21.04}{\strut which} \colorbox{red!21.04}{\strut caused} \colorbox{red!21.04}{\strut the} \colorbox{red!21.04}{\strut box} \colorbox{red!21.04}{\strut to} \colorbox{red!21.04}{\strut weigh} \colorbox{red!21.04}{\strut 2} \colorbox{red!21.04}{\strut pounds} \colorbox{red!21.04}{\strut *} \colorbox{red!21.04}{\strut 3} \colorbox{red!21.04}{\strut =} \colorbox{red!21.04}{\strut <<2*3=6>>6} \colorbox{red!21.04}{\strut pounds.} \colorbox{red!29.78}{\strut The} \colorbox{red!29.78}{\strut next} \colorbox{red!29.78}{\strut step} \colorbox{red!29.78}{\strut was} \colorbox{red!29.78}{\strut to} \colorbox{red!29.78}{\strut add} \colorbox{red!29.78}{\strut 2} \colorbox{red!29.78}{\strut pounds} \colorbox{red!29.78}{\strut of} \colorbox{red!29.78}{\strut jelly} \colorbox{red!29.78}{\strut beans,} \colorbox{red!29.78}{\strut which} \colorbox{red!29.78}{\strut caused} \colorbox{red!29.78}{\strut the} \colorbox{red!29.78}{\strut box} \colorbox{red!29.78}{\strut to} \colorbox{red!29.78}{\strut weigh} \colorbox{red!29.78}{\strut 6} \colorbox{red!29.78}{\strut pounds} \colorbox{red!29.78}{\strut +} \colorbox{red!29.78}{\strut 2} \colorbox{red!29.78}{\strut pounds} \colorbox{red!29.78}{\strut =} \colorbox{red!29.78}{\strut <<6+2=8>>8} \colorbox{red!29.78}{\strut pounds.} \colorbox{red!61.99}{\strut First,} \colorbox{red!61.99}{\strut we} \colorbox{red!61.99}{\strut need} \colorbox{red!61.99}{\strut to} \colorbox{red!61.99}{\strut figure} \colorbox{red!61.99}{\strut out} \colorbox{red!61.99}{\strut what} \colorbox{red!61.99}{\strut the} \colorbox{red!61.99}{\strut weight} \colorbox{red!61.99}{\strut of} \colorbox{red!61.99}{\strut the} \colorbox{red!61.99}{\strut box} \colorbox{red!61.99}{\strut was} \colorbox{red!61.99}{\strut when} \colorbox{red!61.99}{\strut Ken} \colorbox{red!61.99}{\strut put} \colorbox{red!61.99}{\strut the} \colorbox{red!61.99}{\strut jelly} \colorbox{red!61.99}{\strut beans} \colorbox{red!61.99}{\strut in} \colorbox{red!61.99}{\strut it.} \colorbox{red!61.99}{\strut 2} \colorbox{red!61.99}{\strut pounds} \colorbox{red!61.99}{\strut of} \colorbox{red!61.99}{\strut jelly} \colorbox{red!61.99}{\strut beans} \colorbox{red!61.99}{\strut was} \colorbox{red!61.99}{\strut enough} \colorbox{red!61.99}{\strut to} \colorbox{red!61.99}{\strut triple} \colorbox{red!61.99}{\strut the} \colorbox{red!61.99}{\strut weight,} \colorbox{red!61.99}{\strut so} \colorbox{red!61.99}{\strut the} \colorbox{red!61.99}{\strut box} \colorbox{red!61.99}{\strut was} \colorbox{red!61.99}{\strut at} \colorbox{red!61.99}{\strut 2} \colorbox{red!61.99}{\strut pounds} \colorbox{red!61.99}{\strut /} \colorbox{red!61.99}{\strut 3} \colorbox{red!61.99}{\strut =} \colorbox{red!61.99}{\strut <<2/3=.6666666666666667>>.6666666666666667} \colorbox{red!61.99}{\strut pounds.} \colorbox{red!30.33}{\strut Finally,} \colorbox{red!30.33}{\strut Ken} \colorbox{red!30.33}{\strut added} \colorbox{red!30.33}{\strut the} \colorbox{red!30.33}{\strut gummy} \colorbox{red!30.33}{\strut worms,} \colorbox{red!30.33}{\strut causing} \colorbox{red!30.33}{\strut the} \colorbox{red!30.33}{\strut box} \colorbox{red!30.33}{\strut to} \colorbox{red!30.33}{\strut weigh} \colorbox{red!30.33}{\strut 8} \colorbox{red!30.33}{\strut pounds} \colorbox{red!30.33}{\strut *} \colorbox{red!30.33}{\strut 2} \colorbox{red!30.33}{\strut =} \colorbox{red!30.33}{\strut <<8*2=16>>16} \colorbox{red!30.33}{\strut pounds.} \colorbox{red!4.35}{\strut The} \colorbox{red!4.35}{\strut answer} \colorbox{red!4.35}{\strut is} \colorbox{red!4.35}{\strut 16} 
}}}

\end{tcolorbox}

\begin{tcolorbox}[colback=blue!5!white, colframe=blue!75!black, title=Spurious step identified by negative data]
\textbf{Question:} 
Betty is saving money for a new wallet which costs \$100. Betty has only half of the money she needs. Her parents decided to give her \$15 for that purpose, and her grandparents twice as much as her parents. How much more money does Betty need to buy the wallet?

\vspace{2em}
\textbf{Positive response with identified spurious step: }

{\setlength{\fboxsep}{0pt}\colorbox{blue!5!white}{\parbox{\textwidth}{
\colorbox{red!19.9}{\strut Betty's} \colorbox{red!19.9}{\strut parents} \colorbox{red!19.9}{\strut gave} \colorbox{red!19.9}{\strut her} \colorbox{red!19.9}{\strut 15,} \colorbox{red!19.9}{\strut so} \colorbox{red!19.9}{\strut her} \colorbox{red!19.9}{\strut grandparents} \colorbox{red!19.9}{\strut gave} \colorbox{red!19.9}{\strut her} \colorbox{red!19.9}{\strut 2} \colorbox{red!19.9}{\strut *} \colorbox{red!19.9}{\strut 15} \colorbox{red!19.9}{\strut =} \colorbox{red!19.9}{\strut <<15*2=30>>30.} \colorbox{red!16.94}{\strut In} \colorbox{red!16.94}{\strut total,} \colorbox{red!16.94}{\strut Betty} \colorbox{red!16.94}{\strut received} \colorbox{red!16.94}{\strut 30} \colorbox{red!16.94}{\strut +} \colorbox{red!16.94}{\strut 15} \colorbox{red!16.94}{\strut =} \colorbox{red!16.94}{\strut <<30+15=45>>45.} \colorbox{red!62.96}{\strut The} \colorbox{red!62.96}{\strut total} \colorbox{red!62.96}{\strut amount} \colorbox{red!62.96}{\strut of} \colorbox{red!62.96}{\strut money} \colorbox{red!62.96}{\strut Betty} \colorbox{red!62.96}{\strut needs} \colorbox{red!62.96}{\strut is} \colorbox{red!62.96}{\strut 100} \colorbox{red!62.96}{\strut *} \colorbox{red!62.96}{\strut 2} \colorbox{red!62.96}{\strut =} \colorbox{red!62.96}{\strut <<100*2=200>>200.} \colorbox{red!23.31}{\strut Betty} \colorbox{red!23.31}{\strut needs} \colorbox{red!23.31}{\strut 100} \colorbox{red!23.31}{\strut /} \colorbox{red!23.31}{\strut 2} \colorbox{red!23.31}{\strut =} \colorbox{red!23.31}{\strut <<100/2=50>>50} \colorbox{red!23.31}{\strut in} \colorbox{red!23.31}{\strut total.} \colorbox{red!37.06}{\strut Betty} \colorbox{red!37.06}{\strut still} \colorbox{red!37.06}{\strut needs} \colorbox{red!37.06}{\strut to} \colorbox{red!37.06}{\strut save} \colorbox{red!37.06}{\strut 50} \colorbox{red!37.06}{\strut -} \colorbox{red!37.06}{\strut 45} \colorbox{red!37.06}{\strut =} \colorbox{red!37.06}{\strut <<50-45=5>>5.} \colorbox{red!24.35}{\strut The} \colorbox{red!24.35}{\strut answer} \colorbox{red!24.35}{\strut is} \colorbox{red!24.35}{\strut 5} 
}}}
\end{tcolorbox}

\begin{tcolorbox}[colback=blue!5!white, colframe=blue!75!black, title=Spurious step identified by negative data]
\textbf{Question:} 
To make pizza, together with other ingredients, Kimber needs 10 cups of water, 16 cups of flour, and 1/2 times as many teaspoons of salt as the number of cups of flour. Calculate the combined total number of cups of water, flour, and teaspoons of salt that she needs to make the pizza.

\vspace{1em}
\textbf{Positive response with identified spurious step:} 

{\setlength{\fboxsep}{0pt}\colorbox{blue!5!white}{\parbox{\textwidth}{
\colorbox{red!23.08}{\strut Since} \colorbox{red!23.08}{\strut 1} \colorbox{red!23.08}{\strut cup} \colorbox{red!23.08}{\strut is} \colorbox{red!23.08}{\strut equal} \colorbox{red!23.08}{\strut to} \colorbox{red!23.08}{\strut 16} \colorbox{red!23.08}{\strut teaspoons,} \colorbox{red!23.08}{\strut then} \colorbox{red!23.08}{\strut Kimber} \colorbox{red!23.08}{\strut needs} \colorbox{red!23.08}{\strut 128/16} \colorbox{red!23.08}{\strut =} \colorbox{red!23.08}{\strut <<128/16=8>>8} \colorbox{red!23.08}{\strut cups} \colorbox{red!23.08}{\strut of} \colorbox{red!23.08}{\strut salt.} \colorbox{red!23.92}{\strut So} \colorbox{red!23.92}{\strut in} \colorbox{red!23.92}{\strut total,} \colorbox{red!23.92}{\strut she} \colorbox{red!23.92}{\strut needs} \colorbox{red!23.92}{\strut 10+16+8} \colorbox{red!23.92}{\strut =} \colorbox{red!23.92}{\strut <<10+16+8=34>>34} \colorbox{red!23.92}{\strut cups} \colorbox{red!23.92}{\strut of} \colorbox{red!23.92}{\strut water,} \colorbox{red!23.92}{\strut flour,} \colorbox{red!23.92}{\strut and} \colorbox{red!23.92}{\strut salt.} \colorbox{red!79.49}{\strut In} \colorbox{red!79.49}{\strut cups,} \colorbox{red!79.49}{\strut the} \colorbox{red!79.49}{\strut amount} \colorbox{red!79.49}{\strut of} \colorbox{red!79.49}{\strut salt} \colorbox{red!79.49}{\strut is} \colorbox{red!79.49}{\strut 128/6} \colorbox{red!79.49}{\strut =} \colorbox{red!79.49}{\strut <<128/6=21.33>>21.33} \colorbox{red!79.49}{\strut cups} \colorbox{red!10.11}{\strut The} \colorbox{red!10.11}{\strut answer} \colorbox{red!10.11}{\strut is} \colorbox{red!10.11}{\strut 34} 
}}}

\end{tcolorbox}

\begin{tcolorbox}[colback=blue!5!white, colframe=blue!75!black, title=Spurious step identified by negative data]
\textbf{Question:} 
Eight friends compete in a pie eating contest. The chart displays the number of pies each person consumed. Emily ate the most pies, and Mike ate the least. How many more pies did Emily eat compared to Mike?
\begin{verbatim}
    [asy]\ndefaultpen(linewidth(1pt)+fontsize(10pt));
    \npair[] yaxis = new pair[10];\nfor( int i = 0 ; i < 10 ; ++i )
    {\n\nyaxis[i] = (0,i);\n\ndraw(yaxis[i]--yaxis[i]+(17,0));\n}
    \ndraw((0,0)--(0,9));\ndraw((17,9)--(17,0));
    \n\nfill((1,0)--(1,5)--(2,5)--(2,0)--cycle,grey);
    \nfill((3,0)--(3,8)--(4,8)--(4,0)--cycle,grey);
    \nfill((5,0)--(5,3)--(6,3)--(6,0)--cycle,grey);
    \n\nlabel("0",yaxis[0],W);
    \nlabel("1",yaxis[1],W);\nlabel("2",yaxis[2],W);
    \nlabel("3",yaxis[3],W);\nlabel("4",yaxis[4],W);
    \nlabel("5",yaxis[5],W);\nlabel("6",yaxis[6],W);
    \nlabel("7",yaxis[7],W);\nlabel("8",yaxis[8],W);
    \nlabel("Friends/Participants",(8.5,0),S);
    \nlabel("Results of a Pie Eating Contest",(8.5,9),N);
    \nlabel(rotate(90)*"$\\#$ of Pies Eaten",(-1,4.5),W);[/asy]
\end{verbatim}

\textbf{Positive response with identified spurious step: }

{\setlength{\fboxsep}{0pt}\colorbox{blue!5!white}{\parbox{\textwidth}{
\colorbox{red!22.12}{\strut To} \colorbox{red!22.12}{\strut solve} \colorbox{red!22.12}{\strut this} \colorbox{red!22.12}{\strut problem,} \colorbox{red!22.12}{\strut we} \colorbox{red!22.12}{\strut need} \colorbox{red!22.12}{\strut to} \colorbox{red!22.12}{\strut identify} \colorbox{red!22.12}{\strut the} \colorbox{red!22.12}{\strut number} \colorbox{red!22.12}{\strut of} \colorbox{red!22.12}{\strut pies} \colorbox{red!22.12}{\strut Emily} \colorbox{red!22.12}{\strut and} \colorbox{red!22.12}{\strut Mike} \colorbox{red!22.12}{\strut consumed,} \colorbox{red!22.12}{\strut and} \colorbox{red!22.12}{\strut then} \colorbox{red!22.12}{\strut find} \colorbox{red!22.12}{\strut the} \colorbox{red!22.12}{\strut difference} \colorbox{red!22.12}{\strut between} \colorbox{red!22.12}{\strut their} \colorbox{red!22.12}{\strut consumption.} \colorbox{red!22.12}{\strut From} \colorbox{red!22.12}{\strut the} \colorbox{red!22.12}{\strut chart} \colorbox{red!22.12}{\strut provided:} \colorbox{red!38.7}{\strut -} \colorbox{red!38.7}{\strut Emily:} \colorbox{red!38.7}{\strut Consumed} \colorbox{red!38.7}{\strut 8} \colorbox{red!38.7}{\strut pies.} \colorbox{red!5.89}{\strut -} \colorbox{red!5.89}{\strut Mike:} \colorbox{red!5.89}{\strut Consumed} \colorbox{red!5.89}{\strut 1} \colorbox{red!5.89}{\strut pie.} \colorbox{red!75.28}{\strut To} \colorbox{red!75.28}{\strut determine} \colorbox{red!75.28}{\strut how} \colorbox{red!75.28}{\strut many} \colorbox{red!75.28}{\strut more} \colorbox{red!75.28}{\strut pies} \colorbox{red!75.28}{\strut Emily} \colorbox{red!75.28}{\strut ate} \colorbox{red!75.28}{\strut compared} \colorbox{red!75.28}{\strut to} \colorbox{red!75.28}{\strut Mike,} \colorbox{red!75.28}{\strut we} \colorbox{red!75.28}{\strut first} \colorbox{red!75.28}{\strut identify} \colorbox{red!75.28}{\strut how} \colorbox{red!75.28}{\strut many} \colorbox{red!75.28}{\strut pies} \colorbox{red!75.28}{\strut each} \colorbox{red!75.28}{\strut person} \colorbox{red!75.28}{\strut ate.} \colorbox{red!75.28}{\strut -} \colorbox{red!75.28}{\strut Emily} \colorbox{red!75.28}{\strut ate} \colorbox{red!75.28}{\strut 5} \colorbox{red!75.28}{\strut pies.} \colorbox{red!75.28}{\strut -} \colorbox{red!75.28}{\strut Mike} \colorbox{red!75.28}{\strut ate} \colorbox{red!75.28}{\strut 1} \colorbox{red!75.28}{\strut pie.} \colorbox{red!13.68}{\strut Now,} \colorbox{red!13.68}{\strut calculating} \colorbox{red!13.68}{\strut the} \colorbox{red!13.68}{\strut difference} \colorbox{red!13.68}{\strut between} \colorbox{red!13.68}{\strut Emily's} \colorbox{red!13.68}{\strut and} \colorbox{red!13.68}{\strut Mike's} \colorbox{red!13.68}{\strut consumption:} \colorbox{red!33.89}{\strut [} \colorbox{red!7.77}{\strut \text{Difference}} \colorbox{red!7.77}{\strut =} \colorbox{red!7.77}{\strut \text{Emily's} \colorbox{red!7.77}{\strut consumption}} \colorbox{red!7.77}{\strut -} \colorbox{red!7.77}{\strut \text{Mike's} \colorbox{red!7.77}{\strut consumption}} \colorbox{red!7.77}{\strut =} \colorbox{red!7.77}{\strut 8} \colorbox{red!7.77}{\strut -} \colorbox{red!7.77}{\strut 1} \colorbox{red!7.77}{\strut =} \colorbox{red!7.77}{\strut 7.} \colorbox{red!30.23}{\strut ]} \colorbox{red!4.3}{\strut Therefore,} \colorbox{red!4.3}{\strut Emily} \colorbox{red!4.3}{\strut ate} \colorbox{red!4.3}{\strut 7} \colorbox{red!4.3}{\strut more} \colorbox{red!4.3}{\strut pies} \colorbox{red!4.3}{\strut than} \colorbox{red!4.3}{\strut Mike.} \colorbox{red!24.76}{\strut The} \colorbox{red!24.76}{\strut answer} \colorbox{red!24.76}{\strut is} \colorbox{red!24.76}{\strut \boxed{7}} 
}}}

\end{tcolorbox}

\begin{tcolorbox}[colback=blue!5!white, colframe=blue!75!black, title=Spurious step identified by negative data]
\textbf{Question:} 
When the base-16 number $1AB3_{16}$ is written in base 2, how many base-2 digits (bits) does it have?

\textbf{Positive response with identified spurious step: }

{\setlength{\fboxsep}{0pt}\colorbox{blue!5!white}{\parbox{\textwidth}{
\colorbox{red!27.34}{\strut To} \colorbox{red!27.34}{\strut solve} \colorbox{red!27.34}{\strut this} \colorbox{red!27.34}{\strut problem,} \colorbox{red!27.34}{\strut we} \colorbox{red!27.34}{\strut first} \colorbox{red!27.34}{\strut need} \colorbox{red!27.34}{\strut to} \colorbox{red!27.34}{\strut convert} \colorbox{red!27.34}{\strut the} \colorbox{red!27.34}{\strut base-16} \colorbox{red!27.34}{\strut number} \colorbox{red!27.34}{\strut 1AB3\_\{16\}} \colorbox{red!27.34}{\strut to} \colorbox{red!27.34}{\strut base-2.} \colorbox{red!30.56}{\strut Step} \colorbox{red!30.56}{\strut 1:} \colorbox{red!30.56}{\strut Convert} \colorbox{red!30.56}{\strut each} \colorbox{red!30.56}{\strut digit} \colorbox{red!30.56}{\strut of} \colorbox{red!30.56}{\strut the} \colorbox{red!30.56}{\strut number} \colorbox{red!30.56}{\strut from} \colorbox{red!30.56}{\strut base-16} \colorbox{red!30.56}{\strut to} \colorbox{red!30.56}{\strut base-2.} \colorbox{red!7.6}{\strut The} \colorbox{red!7.6}{\strut number} \colorbox{red!7.6}{\strut 1AB3\_\{16\}} \colorbox{red!7.6}{\strut consists} \colorbox{red!7.6}{\strut of} \colorbox{red!7.6}{\strut the} \colorbox{red!7.6}{\strut digits} \colorbox{red!7.6}{\strut 1,} \colorbox{red!7.6}{\strut A,} \colorbox{red!7.6}{\strut B,} \colorbox{red!7.6}{\strut and} \colorbox{red!7.6}{\strut 3.} \colorbox{red!6.2}{\strut Step} \colorbox{red!6.2}{\strut 2:} \colorbox{red!6.2}{\strut Convert} \colorbox{red!6.2}{\strut each} \colorbox{red!6.2}{\strut digit} \colorbox{red!6.2}{\strut individually} \colorbox{red!6.2}{\strut to} \colorbox{red!6.2}{\strut base-2} \colorbox{red!6.2}{\strut representation.} \colorbox{red!18.02}{\strut -} \colorbox{red!18.02}{\strut 1\_\{16\}} \colorbox{red!18.02}{\strut =} \colorbox{red!18.02}{\strut 0001\_2} \colorbox{red!24.41}{\strut -} \colorbox{red!24.41}{\strut A\_\{16\}} \colorbox{red!24.41}{\strut =} \colorbox{red!24.41}{\strut 1010\_2} \colorbox{red!35.94}{\strut -} \colorbox{red!35.94}{\strut B\_\{16\}} \colorbox{red!35.94}{\strut =} \colorbox{red!35.94}{\strut 1011\_2} \colorbox{red!4.07}{\strut -} \colorbox{red!4.07}{\strut 3\_\{16\}} \colorbox{red!4.07}{\strut =} \colorbox{red!4.07}{\strut 0011\_2} \colorbox{red!33.31}{\strut Step} \colorbox{red!33.31}{\strut 3:} \colorbox{red!33.31}{\strut Combine} \colorbox{red!33.31}{\strut the} \colorbox{red!33.31}{\strut base-2} \colorbox{red!33.31}{\strut representations} \colorbox{red!33.31}{\strut of} \colorbox{red!33.31}{\strut each} \colorbox{red!33.31}{\strut digit.} \colorbox{red!16.65}{\strut Concatenating} \colorbox{red!16.65}{\strut the} \colorbox{red!16.65}{\strut individual} \colorbox{red!16.65}{\strut base-2} \colorbox{red!16.65}{\strut representations,} \colorbox{red!16.65}{\strut we} \colorbox{red!16.65}{\strut get:} \colorbox{red!37.28}{\strut 1AB3\_\{16\}} \colorbox{red!37.28}{\strut =} \colorbox{red!37.28}{\strut 0001101010110011\_2} \colorbox{red!27.48}{\strut Step} \colorbox{red!27.48}{\strut 4:} \colorbox{red!27.48}{\strut Count} \colorbox{red!27.48}{\strut the} \colorbox{red!27.48}{\strut total} \colorbox{red!27.48}{\strut number} \colorbox{red!27.48}{\strut of} \colorbox{red!27.48}{\strut digits} \colorbox{red!27.48}{\strut in} \colorbox{red!27.48}{\strut the} \colorbox{red!27.48}{\strut base-2} \colorbox{red!27.48}{\strut representation.} \colorbox{red!7.53}{\strut The} \colorbox{red!7.53}{\strut base-2} \colorbox{red!7.53}{\strut representation} \colorbox{red!7.53}{\strut of} \colorbox{red!7.53}{\strut 1AB3\_\{16\}} \colorbox{red!7.53}{\strut is:} \colorbox{red!37.24}{\strut 0001101010110011\_2} \colorbox{red!80.52}{\strut The} \colorbox{red!80.52}{\strut base-16} \colorbox{red!80.52}{\strut number} \colorbox{red!80.52}{\strut (1AB3\_\{16\}} \colorbox{red!80.52}{\strut )} \colorbox{red!80.52}{\strut can} \colorbox{red!80.52}{\strut be} \colorbox{red!80.52}{\strut written} \colorbox{red!80.52}{\strut in} \colorbox{red!80.52}{\strut base} \colorbox{red!80.52}{\strut 10} \colorbox{red!80.52}{\strut (decimal)} \colorbox{red!80.52}{\strut as:} \colorbox{red!80.52}{\strut [} \colorbox{red!80.52}{\strut 1} \colorbox{red!80.52}{\strut times} \colorbox{red!80.52}{\strut 16\^3} \colorbox{red!80.52}{\strut +} \colorbox{red!80.52}{\strut A} \colorbox{red!80.52}{\strut times} \colorbox{red!80.52}{\strut 16\^2} \colorbox{red!80.52}{\strut +} \colorbox{red!80.52}{\strut B} \colorbox{red!80.52}{\strut times} \colorbox{red!80.52}{\strut 16} \colorbox{red!80.52}{\strut +} \colorbox{red!80.52}{\strut 3} \colorbox{red!80.52}{\strut ]} \colorbox{red!80.52}{\strut where} \colorbox{red!80.52}{\strut (A} \colorbox{red!80.52}{\strut )} \colorbox{red!80.52}{\strut and} \colorbox{red!80.52}{\strut (B} \colorbox{red!80.52}{\strut )} \colorbox{red!80.52}{\strut are} \colorbox{red!80.52}{\strut decimal} \colorbox{red!80.52}{\strut digits.} \colorbox{red!11.52}{\strut This} \colorbox{red!11.52}{\strut sequence} \colorbox{red!11.52}{\strut of} \colorbox{red!11.52}{\strut digits} \colorbox{red!11.52}{\strut has} \colorbox{red!11.52}{\strut a} \colorbox{red!11.52}{\strut total} \colorbox{red!11.52}{\strut of} \colorbox{red!11.52}{\strut 16} \colorbox{red!11.52}{\strut digits.} \colorbox{red!24.13}{\strut The} \colorbox{red!24.13}{\strut answer} \colorbox{red!24.13}{\strut is} \colorbox{red!24.13}{\strut boxed\{16\}} 
}}}
\end{tcolorbox}

\section{Details on Star Graph Problem}
\label{sec:star_graph_appendix}

The star graph problem we study is borrowed from ~\citet{bachmann2024pitfalls}, where given a graph in the shape of a star and a query (center/end node pair), the model is asked to output the full path between the start/end nodes. 

\textbf{Goal.} \citet{bachmann2024pitfalls} show that $\pi_\sft$ minimizes SFT loss by memorizing the ``hard-to-predict'' node adjacent to the center, and copying the rest of the path from the input graph. This task is highlights the failure of SFT at planning problems (akin to math reasoning). 
 Thus, we use this as a case study to understand:

\begin{itemize}[leftmargin=2em]
    \item  when accurate advantage estimation is possible with few negative samples from the $\pi_\sft$ model.
    \item whether there are generalization benefits of advantage-weighted RL when advantage estimates are accurate
    \item when advantage-weighted RL can unlearn the memorized feature that causes $\pi_\sft$ to fail. 
\end{itemize}

\textbf{SFT dataset.} The data we use for supervised fine-tuning consists of 30000 of random star graphs (see examples below) where each graph has a centre node with out degree 2. Hence, there are two paths that originate from the centre node. Each path from the center to one of the end nodes is of length 4. Each node on the path is denoted with a randomly sampled number from 0 to 20.
For example, in the sample ``8,3|3,10|14,13|10,1|17,14|8,17/8,13=8,17,14,13''. The graph is given by the adjacency list ``8,3|3,10|14,13|10,1|17,14|8,17/8,13'', the query is denoted by ``8,13'', and the correct path is given by ``8,17,14,13''. 

\textbf{Test-time inference from the model.} At test time, the input to the LLM is only thw graph and the query:  ``8,3|3,10|14,13|10,1|17,14|8,17/8,13='' and the model is expected to generate the full path from start node 8 to end node 13. When evaluating the test performance of an LLM, we calculate $0/1$ accuracy averaged over $1000$ test star graphs (that are different from train star graphs). The accuracy on a sample is $1$ when the LLM accurately predicts all nodes in the graph. 

\textbf{Failure models of the SFT model, $\pi_\sft$.} A model with perfect accuracy (0 error) would be the one that has accurately learned the correct feature of backtracking the path from the end node to the start node, and then producing it in reverse. 
This computation is precisely what makes the adjacent token ``hard-to-fit''.
On the other hand, if the LLM minimizes next-token prediction loss during SFT by instead memorizing the hard-to-fit adjacent token by overfitting on the random input graph instance, at test time the accuracy would be zero. An intermediate solution that SFT model instead learns is to output a path that is adjacent to the node. At training time, it only needs to memorize which of the two possible path to predict. Note that even this solution does not require the model to backtrack, and is thus easier to quickly learn with a few samples. This would quickly minimize the loss on all but the adjacent node, which the model memorizes as training progresses. On the test set, this model would then have $50\%$ test accuracy.  
Note, that as we increase the size of the graph or the node vocabulary size it becomes easier for the model to overfit on the hard to predict adjacent token given random combinations of the input graph. Thus, we choose the vocabulary size to be 20, which is higher than what is needed to represent any input graph of this size.  

Below we provide examples from degree two, path length 4, node 20 problem, where

\begin{tcolorbox}[colback=blue!5!white, colframe=blue!75!black, title=Examples of 20 node path length 4 star graph problem]

Example 1: \texttt{8,3|3,10|14,13|10,1|17,14|8,17/8,13=8,17,14,13}
    
Example 2: \texttt{14,16|8,10|9,5|3,14|9,3|5,8/9,16=9,3,14,16}

Example 3: \texttt{14,1|10,4|9,7|10,17|4,9|17,14/10,7=10,4,9,7}

Example 4: \texttt{19,8|7,18|14,15|15,7|14,19|8,10/14,10=14,19,8,10}

Example 5: \texttt{1,6|10,1|6,12|10,17|17,18|18,5/10,12=10,1,6,12}

\end{tcolorbox}

\textbf{SFT Training details.} We finetune a pretrained GPT-2 model with 125 million parameters. We train with a batch size of 128, Adam without any weight decay, and a constant learning rate of $1e-5$ . 

\textbf{Advantage estimation and per-step DPO training equivalent to advantage-weighted RL.} For a sample from $\pi_\sft$, we estimate the advantage of each step by sampling $5$ rollouts conditioned on the subsequence uptill the step. We then pair subsequences with shared prefix, $\by_{1:i}$ differing in the last step $\posby_{i+1}$ vs. $\negby_{i+1}$, where the former is the one with the highest estimated advantage and the latter is the one with the lowest estimated advantage. Note that this preference pair construction, closely approximates the preference pair distribution in Theorem~\ref{thm:equivalence}, which would imply that the DPO objective being optimized closely approximates advantage weighred RL in Equation~\ref{eq:advanage_weighted_objective}.

Given these pairs for a batch of star graph problems in SFT data, we update the model with a single gradient step on the DPO objective in Equation~\ref{eq:dpo_objective}. In the next iteration, advantage is estimated and pairs are constructed on a fresh batch of star graphs. We set $\beta=0.1$ in the DPO objective and use the same batch size (one preference pair per star graph). Starting from an SFT checkpoint we train in the above manner for at least 200 iterations. The SFT model is trained for over 600 iterations.

\vspace{-0.3cm}
\section{Implementation Details}
\label{sec:implementation}
\vspace{-0.3cm}

\textbf{Datasets and pretrained LLMs.} We run all our experiments on GSM8K and MATH datasets. Each dataset has about 7.5k training examples. The GSM8K has about 1.3k and MATH has 5k test examples. We conduct experiments with DeepSeek-Math-7B pretrained LLM and LLama2-7B, both of which have pretrained weights publicly available on Huggingface.  

\textbf{Details for SFT/RFT training.} For our positive data scaling results, the SFT model is trained for 5 epochs with a learning rate of $1e-5$, and a batch size of $64$ for all sizes of $\syndata$. We use a holdout validation set to choose the checkpoint and report the performance of the best performing checkpoint across the five epochs. To generate RFT data we only train the SFT model for $2$ epochs (under-trained checkpoint). For each question we sample $M=100$ times, with a temperature of $0.7$ and following ~\citet{yuan2024advancing} we retain at most $4$ most diverse (based on edit distance) and correct completions. This is for our results in Figure~\ref{fig:pos-scaling}(a,b). For Figure~\ref{fig:pos-scaling}(c), we sample more than $4$ correct solutions and keep sampling responses until we have a dataset of size 10k, 20k, \ldots, 50k, when questions are given by the $\syndata$ of size 8k and 16k. 
For our experiment on the RFT dataset with purposely inserted spurious steps, as we describe in the main paper, we obtain spurious steps by computing which intermediate steps in a negative response lead to most incorrect solutions and randomly insert this in between reasoning steps for a positive solution in RFT data. See examples below. 

\textbf{Details for per-step DPO training.} 
Training data for DPO is generated in the procedure outlined in ~\cite{hwang2024self}. The value of $K$ is $5$ for the $\bok(\pi_\sft)$ policy used to estimate the advantage, and a single rollout from the best-of-K policy is used to estimate the advantage of each step in ``rejected'' response, where the ``chosen'' and ``rejected'' pairs for each question are generated from the RFT data, using the method in \cite{pal2024smaug}.
We train per-step DPO with $\beta=0.3$ for all $\syndata$ from GSM8K, and $\beta=0.1$ for all $\syndata$ from MATH.  We train for three epochs, with a learning rate of $1e-6$ and a batch size of $64$ prompts. We use the rmsprop optimizer and a cosine learning rate decay with a warmpup ratio of 0.1.

Overall, to replicate our experiments, we need approximately 1000 hours of compute on H100 clusters with 8 GPUs.

\begin{tcolorbox}[colback=blue!5!white, colframe=blue!75!black, title=RFT data with spurious steps highlighted]

\textbf{Prompt:}

Karen’s work tote bag is twice the weight of her husband Kevin’s briefcase when the briefcase is empty. When Kevin puts his laptop and work papers in his briefcase, it is twice the weight of Karen’s tote. Kevin’s work papers are a sixth of the weight of the contents of his full briefcase. If Karen’s tote weighs 8 pounds, how many more pounds does Kevin’s laptop weigh than Karen’s tote?

\textbf{Response:}

If Karen’s tote weighs 8 pounds, and it is twice the weight of Kevin’s empty briefcase, then Kevin’s empty briefcase weighs 8/2 = <<8/2=4>>4 pounds. {\color{red} In words, “Karen’s tote weighs 8 pounds” is “T = 8”, so thus Keven is 2 pounds.} When Kevin puts his laptop and work papers in his briefcase, it is twice the weight of Karen’s tote at 8 pounds, so the briefcase now weighs 8+4 = <<8+4=12>>12 pounds. Kevin’s work papers are a sixth of the weight of the contents of his full briefcase, which weighs 12 pounds, so the work papers weigh 12/6 = <<12/6=2>>2 pounds. If Kevin’s laptop and work papers weigh 12 pounds in total, and the work papers weigh 2 pounds, then the laptop weighs 12-2 = <<12-2=10>>10 pounds. Thus, Kevin’s laptop is 10-8 = <<10-8=2>>2 pounds heavier than Karen’s tote. The answer is 2.

\end{tcolorbox}

\begin{tcolorbox}[colback=blue!5!white, colframe=blue!75!black, title=RFT data with spurious steps highlighted]

\textbf{Prompt:}

Carl has a jar full of marbles. He takes out 12 marbles to play a game with, but he accidentally drops them and 1/2 the marbles get lost. So Carl takes out 10 more marbles. While he is playing his game his mother comes home with another bag of marbles for him, which has 25 marbles in it. If Carl can't find his lost marbles, then how many marbles is he going to put in the jar after he plays his game, from both his original marbles and the new ones?

\textbf{Response:}

First find out how many marbles Carl loses: 12 marbles * 1/2 = <<12*1/2=6>>6 marbles Then add that number to the number of marbles Carl takes out of the jar to play his game to find out how many total marbles he takes out: 6 marbles + 10 marbles = <<6+10=16>>16 marble. {\color{red} Now add that number to the number of marbles in the jar to find out how many marbles are left: 16 marbles + 12 marbles = <<16+12=28>>28 marbles.}
Then add in the marbles Carl gets from his mother to find out how many marbles he has in total: 16 marbles + 25 marbles = <<16+25=41>>41 marbles. The answer is 41.
\end{tcolorbox}

\appendix

\newpage
\end{document}